\useunder{\uline}{\ul}{}
\theoremstyle{definition}
\newtheorem{theorem}{Theorem}
\newtheorem{lemma}{Lemma}
\DeclareMathOperator*{\argmin}{argmin}
\begin{document}

\title{Hyperbolic Fuzzy C-Means with Adaptive Weight-based Filtering for Efficient Clustering}

\author{Swagato Das $^\ast$, Arghya Pratihar $^\ast$, Swagatam Das

\thanks{
$^\ast$ These authors contributed equally to this work. \\
Indian Statistical Institute, Kolkata, India. \\ 
E-mail:  swagato.isi2227@gmail.com,
arghyapratihar24@gmail.com,
swagatam.das@isical.ac.in.\\}
}

\maketitle
\begin{abstract}
Clustering algorithms play a pivotal role in unsupervised learning by identifying and grouping similar objects based on shared characteristics. Although traditional clustering techniques, such as hard and fuzzy center-based clustering, have been widely used, they struggle with complex, high-dimensional, and non-Euclidean datasets. In particular, the fuzzy $C$-Means (FCM) algorithm, despite its efficiency and popularity, exhibits notable limitations in non-Euclidean spaces. Euclidean spaces assume linear separability and uniform distance scaling, limiting their effectiveness in capturing complex, hierarchical, or non-Euclidean structures in fuzzy clustering. To overcome these challenges, we introduce Filtration-based Hyperbolic Fuzzy $C$-Means (HypeFCM), a novel clustering algorithm tailored for better representation of data relationships in non-Euclidean spaces. HypeFCM integrates the principles of fuzzy clustering with hyperbolic geometry and employs a weight-based filtering mechanism to improve performance. The algorithm initializes weights using a Dirichlet distribution and iteratively refines cluster centroids and membership assignments based on a hyperbolic metric in the Poincar\'e Disc model. Extensive experimental evaluations on $6$ synthetic and $12$ real-world datasets demonstrate that HypeFCM significantly outperforms conventional fuzzy clustering methods in non-Euclidean settings, underscoring its robustness and effectiveness.
\end{abstract}

\begin{IEEEkeywords}
Clustering, Hyperbolic Geometry, Fuzzy $C$-means (FCM), Poincaré Disc, Filtration.
\end{IEEEkeywords}

\section{Introduction}
Clustering is an unsupervised learning technique used to group objects based on similarities, organizing similar items into clusters to reveal underlying patterns or structures. Clustering is a powerful tool for analyzing complex datasets by identifying meaningful groupings without predefined labels. A wide range of clustering algorithms has been proposed, broadly distinguished into hard and soft clustering methods. While hard clustering assigns each data point to a single cluster, soft clustering allows overlaps. Hard clustering methods include center-based, hierarchical-based, distribution-based, and density-based techniques. Center-based methods, such as $k$-means \cite{kmeans}, $k$-medoids \cite{k-medoids}, $k$-harmonic means \cite{kharmonic}, spectral clustering \cite{spectral}, kernel $k$-means \cite{kernel} measure similarity by proximity to cluster centers. Hierarchical-based methods, including hierarchical clustering \cite{hierarchical}, agglomerative clustering \cite{agglomerative} assume stronger similarities between closer data points. Distribution-based methods, such as Expectation Maximization (EM) for Gaussian Mixture Models \cite{gaussian}, robust EM for Gaussian Mixture Models \cite{robustgmm}, cluster the data based on probability distributions. Density-based methods, like DBSCAN \cite{dbscan}, HDBSCAN \cite{hdbscan}, Mean-Shift \cite{mean} identify clusters by analyzing data density in feature space.

Soft clustering allows data points to be associated with multiple clusters with varying degrees of membership. Cluster labels are based on the highest membership value. Notable techniques include possibilistic clustering \cite{pcm} and fuzzy clustering \cite{fuzzy}, with fuzzy $c$-means (FCM) \cite{fcm}, fuzzy density peaks clustering \cite{fuzzydensity}, centroid auto-fused hierarchical FCM \cite{centroidfuzzy}, robust FCM \cite{robust}.

FCM is widely used for its efficiency and simplicity, yet it struggles with complex, high-dimensional, and non-Euclidean datasets. To mitigate these limitations, several variants have been introduced, incorporating improved objective functions and constraints, such as adaptive FCM \cite{adaptive}, generalized FCM \cite{generalized}, fuzzy weighted $c$-means \cite{weightedfuzzy}, and generalized FCM with improved fuzzy partitioning \cite{generalizedfuzzy}. Kernel-based approaches like kernel FCM (KFCM) \cite{kernel} and constrained models, including agglomerative fuzzy $k$-means (AFKM) \cite{afkm}, robust self-sparse fuzzy clustering (RSSFCA) \cite{robust}, robust and sparse fuzzy $k$-means (RSFKM) \cite{rsfkm}, possibilistic FCM (PFCM) \cite{pfcm}, principal component analysis-embedded FCM (P\_SFCM) \cite{psfcm} as well as hyperbolic extensions such as hyperbolic smoothing-based fuzzy clustering (HSFC) \cite{hsfc}, Integration of hyperbolic tangent and Gaussian kernels for FCM (HGFCM) \cite{hgfcm}, have also been explored. However, despite these modifications, most of these methods remain fundamentally limited in non-Euclidean spaces, as they can still partially rely on Euclidean assumptions that fail to capture the inherent geometric complexity and hierarchical structure of such data. Consequently, these approaches often exhibit suboptimal clustering performance, reinforcing the need for a more robust, geometry-aware solution. Recently, Wu and Pan proposed FCPFM \cite{fcpfmc}, a fuzzy clustering algorithm in hyperbolic space that utilizes Fréchet mean for centroid updates, and introduced a kernelized variant of it. 

In this paper, we propose Filtration-based \textbf{Hype}rbolic \textbf{F}uzzy $\mathbf{C}$-\textbf{M}eans or HypeFCM, a novel fuzzy clustering algorithm for non-Euclidean spaces. Our approach not only integrates fuzzy clustering principles with hyperbolic geometry but also employs a weight-based filtering mechanism to improve cluster assignments. A defining strength of our algorithm lies in ensuring that the centroid update remains inherently constrained within the geometry of the Poincaré ball, thereby preserving the integrity of hyperbolic space throughout the clustering process. The algorithm initializes weights using a Dirichlet distribution, iteratively updates centroids and membership weights, and computes distances using the metric of the Poincar\'e Disc model. The Poincaré Disc metric naturally preserves hierarchical structures due to the exponential distance scaling and the negative curvature of hyperbolic space. In contrast to Euclidean space, where distances grow linearly, hyperbolic distances expand exponentially as points move away from the origin. This property allows hyperbolic space to efficiently embed tree-like and hierarchical structures, as the distances between levels of a hierarchy are naturally stretched, ensuring clear separability between different clusters. Moreover, the Poincaré Disc model maintains conformal mapping, meaning local angles and relative positioning are preserved, which helps retain structural integrity when clustering complex, high-dimensional data. These characteristics make the Poincaré Disc metric particularly well-suited for capturing latent relationships and hierarchical patterns in non-Euclidean datasets where traditional Euclidean methods fall short. A filtration step refines connections by focusing on relevant associations, followed by weight updates until convergence. The algorithm culminates in a defuzzification process, where definitive cluster assignments are determined by optimizing the final weight matrix, effectively mapping the continuous membership values to discrete cluster labels.

\noindent
\textbf{Contributions.} Our method enjoys the following advantages: 
\begin{enumerate} 
    \item Our proposed algorithm HypeFCM incorporates the metric of the Poincaré Disc model by embedding the datasets inside the Poincaré Disc and ensuring the subsequent centroid updates are also constrained within the geometry of the Poincaré ball.
    \item Our method introduces a selective filtration process that prunes less significant relationships, optimizes computational efficiency by focusing on the most relevant geometric connections, and improves cluster definition through a targeted weight refinement strategy. 
    \item We present a thorough convergence analysis and discuss the computational complexity analysis of the proposed method. Other necessary proofs and derivations are provided in the Appendix.
\end{enumerate}

\begin{table}[ht]
    \centering
    \caption{Notations.}
    \begin{tabular}{ll}
    \hline
    \textbf{Notation} & \textbf{Description} \\
    \hline
    $\mathbf{X} \in \mathbb{R}^{n \times p}$ & The given dataset \\
    $\mathbf{V} \in \mathbb{R}^{c \times p}$ & The cluster centroid matrix \\
    $\mathbf{W} \in \mathbb{R}^{n \times c}$ & The membership matrix \\
    $\mathbf{x}_i \in \mathbb{R}^{p}$ & The $i$-th sample in $\mathbf{X}$ \\
    $\mathbf{v}_j \in \mathbb{R}^{p}$ & The $j$-th cluster centroid in $\mathbf{V}$ \\
    $\mathbf{1}_c$ & All-ones vector with a length of $c$ \\
    $n$ & Number of samples in $\mathbf{X}$ \\
    $p$ & Dimensionality of $\mathbf{X}$ \\
    $c$ & Number of clusters in $\mathbf{X}$ \\
    $m$ & The value of the fuzziness parameter \\
    $T$ & Maximum number of iterations \\
    $k$ & The Filtration value in the algorithm \\
    $\operatorname{Tr}(\cdot)$ & Denotes the trace of a matrix \\
    $\alpha$ & The curvature parameter of the Poincar\'e Disc model \\
    $\varepsilon$ & The tolerance value \\
    
    \hline
    \end{tabular}
\end{table}

\noindent
The paper is structured as follows: Section \ref{preliminaries} introduces the mathematical preliminaries, Section \ref{related} reviews related works, Section \ref{algo} presents the proposed HypeFCM algorithm in detail, Section \ref{cc} analyzes its computational complexity, while Section \ref{convergence} discusses convergence analysis of our method. Section \ref{experiments} presents experimental results on real and synthetic datasets. Section \ref{discussions} provides detailed discussions of the proposed method, and 
Section \ref{conclusion} concludes the paper.

\section{Preliminaries} \label{preliminaries}
We present the fundamental mathematical preliminaries of hyperbolic geometry required for our work. For a more in-depth exploration, refer to \cite{docarmo}, \cite{spivak}.
\subsection{Hyperbolic space} A Hyperbolic space, represented as $\mathbb{H}^{n}$, is a non-Euclidean space of dimension $n$ which is characterized as a simply-connected Riemannian manifold with a constant negative sectional curvature $-1$. The Killing-Hopf theorem \cite{lee2006riemannian} affirms that any two such Riemannian manifolds are isometrically equivalent. We will briefly discuss the Poincaré Disc Model here. 


\noindent
\textbf{Poincaré Disc Model:} This is a model of hyperbolic space in which all points are inside the unit disc in $\mathbb{R}^{n}$, and geodesics are either diameters or the circular arcs. The metric between two points $\mathbf{X}$ and $\mathbf{Y}\left(\left\|\mathbf{X}\right\|,\left\|\mathbf{Y}\right\|<1\right)$ is defined as,

\begin{equation} 
\begin{aligned}
& d\left(\mathbf{X, Y}\right):=
 \cosh ^{-1}\left(1+\frac{2\left\|\mathbf{X}-\mathbf{Y}\right\|^{2}}{\left(1-\left\|\mathbf{X}\right\|^{2}\right)\left(1-\left\|\mathbf{Y}\right\|^{2}\right)}\right).
\end{aligned}
\end{equation}






\subsection{Gyrovector Spaces}

The framework of gyrovector spaces establishes an elegant non-associative algebraic structure that naturally captures and formalizes the properties of hyperbolic geometry, analogous to the way vector spaces provide the algebraic setting for Euclidean geometry \cite{ungargyrovector}. We denote $\mathbb{D}_{\alpha}^{p}:=\left\{\mathbf{v} \in \mathbb{R}^{p} \mid \alpha\|\mathbf{v}\|^{2}<1\right\}$ taking $\alpha \geq 0$. If $\alpha=0$, then $\mathbb{D}_{\alpha}^{p}=\mathbb{R}^{p}$; if $\alpha>0$, then $\mathbb{D}_{\alpha}^{p}$ is the open ball of radius $1 / \sqrt{\alpha}$. If $\alpha=1$ then we recover the unit ball $\mathbb{D}^{p}$.
 
The operations in Poincar\'e Disc Model are well-defined, computationally efficient as most of the geometric operations have simple closed-form expressions in Poincar\'e Disc Model. In this context, we will provide a brief discussion on m\"obius gyrovector addition and m\"obius scalar multiplication on the Poincar\'e Disc model. Due to the invariance of geometric properties under isometric mappings between the hyperbolic spaces, the fundamental additive and multiplicative algebraic structures can be isomorphically translated across different models of hyperbolic geometry while preserving their essential characteristics \cite{ratcliffehyperbolic}. We have established the isometric equivalence between the Poincaré Disc model and the hyperboloid model by explicitly constructing a mapping between them.
Let \( \mathbb{D}^n = \{ x \in \mathbb{R}^n: \|x\| < 1 \} \) be the Poincaré Disc model, and let the hyperboloid model be,
\[
\mathbb{H}^n = \{ x \in \mathbb{R}^{n+1} : -x_0^2 + x_1^2 + \cdots + x_n^2 = -1,\ x_0 > 0 \}.
\]
Then the mapping:
\[
\phi : \mathbb{D}^n \to \mathbb{H}^n,\quad
x \mapsto \left( \frac{1 + \|x\|^2}{1 - \|x\|^2},\ \frac{2x}{1 - \|x\|^2} \right)
\]
is an isometry. It preserves the hyperbolic distance and induces the same Riemannian metric. Hence, we will be using the Poincar\'e Disc model throughout \cite{hypenn} \cite{numericalstab}.\\
\noindent
\textbf{M\"obius addition.} The M\"obius addition of $\mathbf{v}$ and $\mathbf{w}$ in $\mathbb{D}_{\alpha}^{p}$ is defined as :
\begin{equation}\label{eq: addition}
    \mathbf{v} \oplus_{\alpha} \mathbf{w}:=\frac{\left(1+2 \alpha\langle \mathbf{v}, \mathbf{w}\rangle+\alpha\|\mathbf{w}\|^{2}\right) \mathbf{v}+\left(1-\alpha\|\mathbf{v}\|^{2}\right) \mathbf{w}}{1+2 \alpha\langle \mathbf{v}, \mathbf{w}\rangle+\alpha^{2}\|\mathbf{v}\|^{2}\|\mathbf{w}\|^{2}}.
\end{equation}
In particular, when $\alpha=0$, this conforms with the Euclidean addition of two vectors in $\mathbb{R}^{p}$.
However, it satisfies $\mathbf{v} \oplus_{\alpha} \mathbf{0}=\mathbf{0} \oplus_{\alpha} \mathbf{v}= \mathbf{v}$. Moreover, for any $\mathbf{v}, \mathbf{w} \in \mathbb{D}_{\alpha}^{p}$, we have $(-\mathbf{v}) \oplus_{\alpha} \mathbf{v}= \mathbf{v} \oplus_{\alpha}(-\mathbf{v})=\mathbf{0}$ and $(-\mathbf{v}) \oplus_{\alpha}\left(\mathbf{v} \oplus_{\alpha} \mathbf{w}\right)= \mathbf{w}$.

\noindent
\textbf{M\"obius scalar multiplication.} For $\alpha>0$, the M\"obius scalar multiplication of $\mathbf{v} \in \mathbb{D}_{\alpha}^{p} \backslash\{\boldsymbol{0}\}$ by a real number $\lambda \in \mathbb{R}$ is defined as,
\begin{equation} \label{eq: mult}
    \lambda \otimes_{\alpha} \mathbf{v}:=(1 / \sqrt{\alpha}) \tanh \left(\lambda \tanh ^{-1}(\sqrt{\alpha}\|\mathbf{v}\|)\right) \frac{\mathbf{v}}{\|\mathbf{v}\|}.
\end{equation}
and $\lambda \otimes_{\alpha} \mathbf{0}:=\mathbf{0}$. As the parameter $\alpha$ approaches zero, the expression reverts to conventional Euclidean scalar multiplication: $\lim_{\alpha \rightarrow 0} \lambda \otimes_{\alpha} \mathbf{v}=\lambda \mathbf{v}$. \\
The Hyperbolic Distance function on $\left(\mathbb{D}_{\alpha}^{p}, g^{\alpha}\right)$ is given by,
\begin{equation}\label{eq: hyberbolic}
    d_{hyp}(\mathbf{v}, \mathbf{w})=(2 / \sqrt{\alpha}) \tanh ^{-1}\left(\sqrt{\alpha}\left\|-\mathbf{v} \oplus_{\alpha} \mathbf{w}\right\|\right) .
\end{equation}
\textbf{Riemannian Log-Exp Map.} Given two points $x,y \in \mathbb{D}_{\alpha}^{p}$, the logarithmic map at $x$ applied to $y$ is denoted as:
\begin{equation} \label{eq:logmap}
\log_{\mathbf{x}}(\mathbf{y}) = \frac{2}{\lambda_{\mathbf{x}}} \cdot \frac{\tanh^{-1}\left( \sqrt{\alpha} \cdot \| -\mathbf{x} \oplus_\alpha \mathbf{y} \| \right)}{\| -\mathbf{x} \oplus_\alpha \mathbf{y} \|} \cdot \left( -\mathbf{x} \oplus_\alpha \mathbf{y} \right),
\end{equation}
where ${\lambda_{\mathbf{x}}}= \frac{2}{1- \lambda{\|x\|}^2}$ and $\bigoplus_\alpha$ represents the M\"obius gyrovector addition as Equation \ref{eq: addition}.\\
Given $x\in \mathbb{D}_{\alpha}^{p}$, $v \in T_x(\mathbb{D}_{\alpha}^{p})$, the exponential map is defined as,
\begin{equation} \label{eq:expmap}
\exp_{\mathbf{x}}(\mathbf{v}) = \mathbf{x} \oplus_\alpha \left( \tanh\left( \frac{\lambda_{\mathbf{x}} \cdot \sqrt{\alpha} \cdot \|\mathbf{v}\|}{2} \right) \cdot \frac{\mathbf{v}}{\sqrt{\alpha} \cdot \|\mathbf{v}\|} \right).
\end{equation}
As the parameter $\alpha$ approaches $0$, 
\begin{equation} \label{eq:eucexp}
    \lim_{\alpha \rightarrow 0} \exp_{\mathbf{x}}(\mathbf{v})=\mathbf{(x+v)},
\end{equation}
\begin{equation} \label{eq:euclog}
\lim_{\alpha \rightarrow 0} \log_{\mathbf{x}}(\mathbf{y})=\mathbf{(y-x)},
\end{equation}
which are the exponential and logarithmic maps in Euclidean space.

\section{related works} \label{related}

\subsection{Fuzzy $C$-means (FCM)} The most well-known method for fuzzy clustering is the FCM clustering algorithm\cite{fcm}. Let $\mathbf{X} = \{\mathbf{x}_1, \mathbf{x}_2, \ldots \mathbf{x}_n\}$ be a sample of $n$ observations in $\mathbb{R}^{p}$. Thus, $\mathbf{X}$ is a $(n \times p)$ data matrix, $\mathbf{V}$ is an $(c \times p)$ matrix that represents centroids of the $c$ clusters in $\mathbb{R}^{p}$ and $\mathbf{W}$ is the $(n \times c)$ membership matrix with elements $w_{ij} \in [0,1]$, then the following constrained weighted least square criterion, which is also the objective function for FCM is to be minimized:
    \[ Minimize \hspace{5pt}
    J(\mathbf{X}, \mathbf{W}, \mathbf{C}) = \sum_{i=1}^{n} \sum_{j=1}^{c} (w_{ij})^m \|\mathbf{x}_i - \mathbf{v}_j\|^2,
    \]
    subject to,
    \[
    \sum_{j=1}^{c} w_{ij} = 1, \quad \text{for all } i \in \{1, 2, \dots, n\},
    \]
    \begin{equation} \label{fcm}
    0 < \sum_{i=1}^{n} w_{ij} < n, \quad \text{for all } j \in \{1, 2, \dots, c\}.
    \end{equation} \\
 where, $\mathbf{x}_i$  is the $i^{th}$ row of $\mathbf{X}$, $\mathbf{v}_j$ is the $j^{th}$ row of $\mathbf{V}$ in $\mathbb{R}^{p}$, the centroid of the $j^{th}$ cluster.\\
 The parameter $m \in [2, \infty)$ in \cite{bezdek2013pattern} is the fuzziness parameter. According to \cite{yangsurvey}, the value of $m$ is usually taken as 2.
 Using the Alternative Minimization method, the update formula for the centroid is given by :
\begin{equation}
    \mathbf{v}_j = \frac{\sum_{i=1}^{n} (w_{ij})^{m} \mathbf{x}_i} {\sum_{i=1}^{n} (w_{ij})^{m}}.
\end{equation}
\noindent
The solution for the membership matrix $\mathbf{W}$ is given as:

\begin{equation}
    w_{ij} = \left[ \sum_{k=1}^c \left( \frac{\| \mathbf{x}_i - \mathbf{v}_j \|^2}{\| \mathbf{x}_i - \mathbf{v}_k \|^2} \right)^{\frac{1}{m-1}} \right]^{-1}.
\end{equation}

\subsection{HSFC}
For the clustering problem of the $n$ rows of data matrix $\mathbf{X}$ in $c$ clusters, we can seek for the minimum distance between every $\mathbf{x}_{i}$ and its cluster centroid $\mathbf{v}_{j}$ :

$$
d_{i}^{2}=\min _{\mathbf{v}_{j} \in \mathbf{V}}\left\|\mathbf{x}_{i}-\mathbf{v}_{j}\right\|_{2}^{2}.
$$
Consider the hyperbolic smoothing function $\phi_(y, \tau) = \frac{y+ \sqrt{y^2 +\tau^2}}{2}$, for all $y \in \mathbb{R}, \tau \geq 0$ and the function: \\
$\theta\left(\mathbf{x}_{i}, \mathbf{v}_{j}, \gamma\right)=$ $\sqrt{\sum_{k=1}^{p}\left(x_{ik}-v_{jk}\right)^{2}+\gamma^{2}}$, for $\gamma>0$. Hence, the Minimization Problem of Hyperbolic Smoothing Clustering Method (HSFC) \cite{hyperbolic} is transformed into:

\begin{equation}
\begin{array}{ll}
   \min \hspace{6pt} \sum_{i=1}^{n} d_{i}^{2}, \\ 
   \text {subject to} \sum_{j=1}^{c} \psi_{\sigma}\left(d_{i}-\theta\left(\mathbf{x}_{i}, \mathbf{v}_{j}, \gamma\right), \tau\right) \geq \epsilon,\quad\forall i \in [n]. 
\end{array}
\end{equation}



\subsection{FCSR} The minimization function of the Fuzzy Clustering Guided by Spectral Rotation and Scaling (FCSR) \cite{fcsr} model is defined as:

\begin{equation}
\begin{aligned}
J_{\text{FCSR}}(W, V, R, \Phi) &= \sum_{i=1}^{n} \sum_{k=1}^{c} \| \mathbf{x}_i - \mathbf{v}_k \|_2^2 \, w_{ik}^2 \\
& + \lambda \| W - \Phi F R \|_F^2 \\
\text{s.t.} \quad & W \geq 0, \quad W \mathbf{1} = \mathbf{1}, \\
& R^\top R = I, \\
& \Phi = \text{diag}(\phi_{11}, \phi_{22}, \ldots, \phi_{nn}), \quad \phi_{ii} \geq 0 \\
& \quad (i = 1, 2, \ldots, n),
\end{aligned}
\end{equation}
\noindent
where, $F \in \mathbb{R}^{n \times c}$ is the spectral embedding matrix, $R \in \mathbb{R}^{c \times c}$ is the rotation matrix, $W \in \mathbb{R}^{d \times d'}$ is the projection matrix to a $d'$-dimensional subspace, $\Phi \in \mathbb{R}^{n \times n}$ is the diagonal scaling matrix.

\subsection{EFKM} The objective function of the proposed EFKM \cite{efkm} (Fuzzy $K$-Means Clustering with Discriminative Embedding) is defined as:

\begin{equation}
\min_{\substack{P^\top P = I,\, W \mathbf{1}_c = \mathbf{1}_n,\, W \geq 0,\\ M = [\boldsymbol{\mu}_1, \dots, \boldsymbol{\mu}_c]}} 
\frac{\displaystyle \sum_{i=1}^{n} \sum_{j=1}^{c} w_{ij}^m \left\| P^\top \mathbf{x}_i - \textbf{v}_j \right\|_2^2}
{\operatorname{Tr}(P^\top X X^\top P)}
\end{equation}
\noindent
where, $P \in \mathbb{R}^{d \times d'}$ is the projection matrix to a $d'$-dimensional subspace, $\operatorname{Tr}(\cdot)$ denotes the trace of a matrix, The constraints $W \geq 0$, $W \mathbf{1}_c = \mathbf{1}_n$ ensure that each row of $W$ sums to 1.





\begin{figure*}
    \centering
    \includegraphics[width= 0.9 \linewidth]{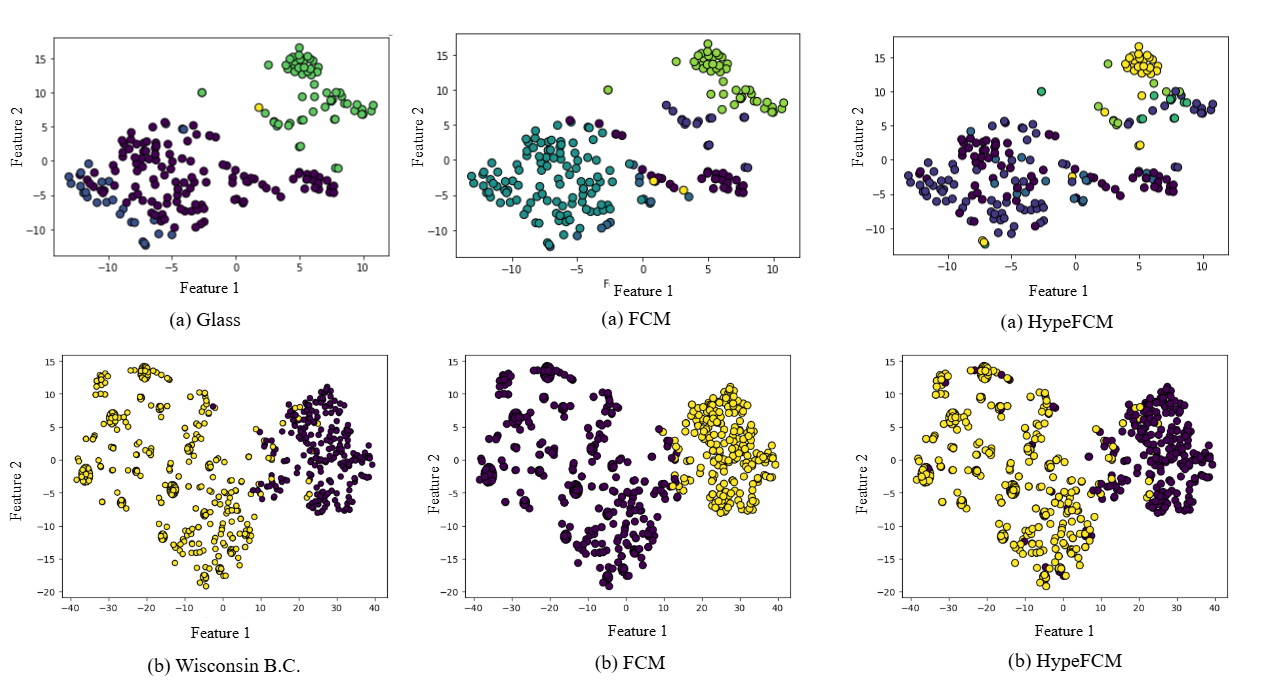}
    \caption{The t-SNE visualization of real datasets (a) Glass, (b) Wisconsin Breast Cancer, respectively, for FCM and HypeFCM clustering methods. }
    \label{fig:real-hfcm}
\end{figure*}

\section{HypeFCM: our proposed method} \label{algo}
In this section, we will present our proposed clustering algorithm, HypeFCM, with intricate details.

\noindent
\textbf{Embedding into a Poincar\'e Disc.}
Our algorithm begins with embedding the dataset \textbf{X}$_{(n\times p)}$ into the hyperbolic space, here the Poincar\'e Disc model of radius $\frac{1}{\sqrt{\alpha}}$. We obtain $\mathbf{X'}= \{\mathbf{x}_1', \mathbf{x}_2', \cdots, \mathbf{x}_n'\} \in \mathbb{D}_{\alpha}^{p}$, where $\mathbb{D}_{\alpha}^{p}:=\left\{\mathbf{x} \in \mathbb{R}^{p} \mid \alpha\|\mathbf{x}\|^{2}<1\right\}$.

\noindent
\textbf{Initializing the Weight Matrix.}
The weight matrix $\textbf{W}= (w_{ij})_{n\times c}$, contains the membership probabilities of the corresponding datapoint of belonging to one of the clusters. Here, the rows of $\textbf{W}$ are initialized as i.i.d. samples from the Dirichlet distribution with all equal probabilities, i.e. 
$${\{w_{ij}\}_{j=1}^{c}\stackrel{i.i.d.}{\sim} Dirichlet (\mathbf{1}_{c}/(\mathbf{1}^T\mathbf{1}))}\quad\forall i \in [n], $$ where, $\mathbf{1}_{c}$ is the vector $(1,1, \cdots,1) \in \mathbb{R}^c$.

\begin{algorithm}[!ht]
        \caption{\footnotesize HypeFCM Algorithm (Filtration-based Hyperbolic Fuzzy $C$-means)}
        \label{main_algo}
        \scriptsize
        {\bfseries Input:} Dataset (\textbf{X}$_{n \times p}$), $k$ = filtration value, $c$ = number of clusters, $m$  = fuzziness parameter, $T$ = maximum number of iterations, $\varepsilon$ = tolerance value.
        \let \oldnoalign \noalign
        \let \noalign \relax
        \let \noalign \oldnoalign
        \begin{algorithmic}[1]
            \STATE Embed the original dataset $\mathbf{X}$ into the Poincar\'e Disc of radius $\frac{1}{\sqrt{\alpha}}$. Obtain $\mathbf{X'}= \{\mathbf{x}_1', \mathbf{x}_2', \cdots, \mathbf{x}_n'\} \in \mathbb{D}_{\alpha}^{p}.$
            
            \STATE Initialize the weight matrix, \textbf{W$^{(0)}_{n \times c} = (w^{(0)}_{ij})_{i\in[n],j\in[c]}$}, with each row being an i.i.d. sample from the Dirichlet distribution with all equal probabilities.
            
            \STATE  Obtain the centroids $\{\mathbf{v}^{(t)}_j\}_{j=1}^{c}$ using the Riemannian log-exp map in Equation \ref{eq:centroid}.

            \STATE  Obtain the distance matrix, \textbf{U$^{(t)}$} by considering distances from the centroids,
            $$
            \mathbf{U^{(t)}_{n \times c}} := u^{(t)}_{ij} := d_{hyp}^2(\mathbf{x}_i',\mathbf{v}^{(t)}_j))_{i\in[n],j\in[c]}.
            $$
            using the squared hyperbolic distance between data points and centroids.

             \STATE  Apply filtration based on the distances of data points from each of the centroids and the rest of the entries of \textbf{U}$^{(t)},$ are assigned to zero to obtain $\mathbf{U}'^{(t)}$. \\

            \STATE  Update the weights using Equation \ref{eq:wt_matrix}.

            \STATE \For{{$t$ from $1$ to $T$}}
            {
                Update $\mathbf{W}^{(t)}$ by Step ($6$)\\
                Update centroids $\mathbf{V}^{(t)} := \{\mathbf{v}^{(t)}_j\}_{j=1}^c$ by Step ($3$)\\
                \If {$||\mathbf{W}^{(t+1)} - \mathbf{W}^{(t)}|| < \varepsilon$}{\textbf{break}}
            }

            \STATE Assign labels by selecting the index of the maximum membership value in each row of the finally updated weight matrix. \\
        {\bfseries Output:} The clusters $Y_1, Y_2, \cdots Y_c.$
        \end{algorithmic}
    
\end{algorithm}

\noindent
\textbf{Computing the Centroids.}
The matrix (\textbf{V}$_{c\times p}$), contains $c$ different centroids in each of its rows. The update expression for the centroids at each iteration is given as:
\begin{equation} 
    \bar{v}_j = \frac{\sum_{i=1}^{n} (w_{ij}^{(t)})^m \cdot \log_{\mathbf{v}_j^{(t-1)}}(\mathbf{x}_i')}{\sum_{i=1}^{n} (w_{ij}^{(t)})^m}
\end{equation}

\begin{equation} \label{eq:centroid}
    \mathbf{v}_j^{(t)} = \exp_{\mathbf{v}_j^{(t-1)}}(\bar{v}_j)
\end{equation}
When limit $\alpha$ tends to $0$, this update formula \ref{eq:centroid} becomes:
$$
\textbf{v}_j^{(t+1)} \xrightarrow{\alpha \to 0} \textbf{v}_j^{(t)} + \frac{\sum_i (w_{ij})^m (\mathbf{x}'_i - \textbf{v}_j^{(t)})}{\sum_i (w_{ij})^m}
= \frac{\sum_i (w_{ij})^m \textbf{x}'_i}{\sum_i (w_{ij})^m},
$$
which conforms to the update formula for the centroid in FCM in Euclidean space, using the Equations \ref{eq:eucexp} and \ref{eq:euclog}.

\noindent
\textbf{Computing the distance matrix and applying filtration.}
Obtain a distance matrix $\mathbf{U}$, $$\mathbf{U^{(t)}_{n \times c}} := u^{(t)}_{ij} := d_{hyp}^2(\mathbf{x}_i', \mathbf{v}^{(t)}_{j})_{i\in[n],j\in[c]},$$
where, $d_{hyp}$ is the hyperbolic distance mentioned in Equation \ref{eq: hyberbolic}.
In each row of the matrix $\mathbf{U}$, the nearest $k$ data points from the respective centroid are kept as it is. The remaining entries are assigned to $0$, that is, given the $t^{th}$ iteration of the centroids, only those data points that are the least distant from the respective centroid are kept and the rest are zero. \\
In essence, we define a relation $\mathbf{R}$ on the Cartesian product of the set of the centroids and the set of the datapoints, where the $i^{th}$ datapoint  and the $j^{th}$ centroid are related such that, 
$$
(i,j) \in \mathbf{R} \iff u_{ij} \leq u_{i'j}, \forall i' \in S \subset [n]  \bigg| n \geq |S|\geq (n-k). 
$$
\noindent
\textbf{Updating the Weight Matrix and Centroids.}
We construct an updated distance matrix, say $\mathbf{U}'_{n \times c}$, where the $(i,j)^{th}$ entry contains the updated distance between $\mathbf{x}_i'$ and $\mathbf{v}_j$ after applying the filtration. The expression for this is given as:
$$u'_{ij} = \Bigl(d_{hyp}^2(\mathbf{x}_{i}'- \mathbf{v}_{j})\mathds{1}\{(i,j) \in \mathbf{R}\}  \Bigr) \quad \forall i \in [n], \forall j \in [c],$$
where $\mathds{1\{.\}}$ denotes the indicator function and $\mathbf{R}$ being the relation as described above. Then the expression for the updation of the weight matrix $\mathbf{W}$ is given as:
\begin{equation}\label{eq:wt_matrix}
    w_{ij}^{(t+1)} = \frac{({u'}_{ij}^{(t)})^{-1/(m-1)}}{\sum_{j = 1}^{c}({u'}_{ij}^{(t)})^{-1/(m-1)}} \quad \forall i \in [n], \forall j \in [c].
\end{equation}

\noindent
\textbf{Assigning labels and obtaining the clusters.}
The cluster labels from the algorithm are then obtained by the defuzzification approach of maximum membership value in the finally updated weight matrix $\mathbf{W}$, assigning the vector containing the labels obtained from the clusters as $\mathbf{z}_{n \times 1}$, which is defined as follows, 
$$
z_i = \arg\max_{j}(w_{ij}) \quad \forall i \in [n],
$$
therefore, the clusters are $\{Y_r\}$, which are defined as $Y_r := \{\mathbf{x}_{i}'\mid z_i = r\} \quad \forall r \in [c]$.

\section{computational complexity analysis} \label{cc}
\noindent
In this section, we are going to present the step-by-step complexity analysis of HypeFCM.

\noindent
\textbf{Embedding into a Poincar\'e Disc.} Embedding the $n$ datapoints into a Poincar\'e Disc requires the time complexity of $\mathcal{O}(n p)$, where $p$ is the dimension of the points.

\noindent
\textbf{Initializing the Weight Matrix.}
Initializing each row of $\mathbf{W}$ from the Dirichlet distribution requires the time complexity of $\mathcal{O}(c)$. For $n$ rows, the time complexity is $\mathcal{O}(n c)$.

\noindent
\textbf{Computing the Centroids.}
For each of the $c$ centroids, computing $(w_{ij})^m$ for all data points requires $\mathcal{O}(n)$. Each centroid update involves weighted Fréchet mean computation using the Riemannian logarithm and exponential maps, which requires the time complexity of $\mathcal{O}(p)$ in each case. Thus, the total time complexity for this step becomes $\mathcal{O}(cnp)$.

\noindent
\textbf{Computing the distance matrix and applying filtration.}
Computing the hyperbolic distance for $n$ points from the $c$ centroids requires $\mathcal{O}(cn)$ many computations. For each of the $n$ rows, filtering the $k$ nearest elements from the respective centroid requires the total time complexity $\mathcal{O}(nc\log k)$.

\noindent
\textbf{Updating the Weight matrix and centroids.}
We update the weight matrix $\mathbf{W}$ using Equation \ref{eq:wt_matrix}. Computing the reciprocals of the entries of $\mathbf{U'}$ and normalizing by the row sums of $\mathbf{U'}$ requires the time complexity $\mathcal{O}(n c)$.

\noindent
\textbf{Iteration Step.}
Performing steps $3$-$6$ for each iteration in Algorithm \ref{main_algo} requires time complexity of $\mathcal{O}(ncp + nc\log k)$. Thus, the total time complexity for $T$ iterations becomes $\mathcal{O}(T(ncp + nc\log k))$.

\noindent
\textbf{Assigning the labels and obtaining the clusters.}
For each of the $n$ rows, finding the maximum weight among the $c$ values requires the time complexity of $\mathcal{O}(nc)$.

\noindent
Therefore, the overall time complexity for our HypeFCM algorithm is $\mathcal{O}(T(ncp + nc\log k))$, equivalent to $\mathcal{O}(Tncp)$, since $ k\ll n$, which is equivalent to the overall time complexity of the FCM algorithm in Euclidean space.

\begin{table*}[h]
\centering
\caption{\centering{Comparison of Clustering Performance across multiple methods, FCM, PCM, P\_SFCM, $K$-means, MiniBatchKmeans, EFKM, UFCER, FCSR, IFKMHC, HSFC, FCPFM, with our proposed HypeFCM on $3$ synthetic datasets along with $7$ Real-World datasets, presented as Means with Standard Deviations. The best and second-best results are highlighted in boldface and underlined, respectively.}}
\label{tab:dataset}
\resizebox{\columnwidth}{!}{\begin{large}
\begin{tabular}{llcccccccccccc}
\toprule
Datasets & Metric & FCM & PCM & P\_SFCM & $K$-means & MiniBatchKmeans & EFKM & UFCER & FCSR & IFKMHC & HSFC & FCPFM & \textbf{HypeFCM (Ours)} \\
\midrule
\multirow{2}{*}{Cure-t1-2000n} 
& ARI & 0.471$\pm$0.030$^\dagger$ & 0.079$\pm$0.024$^\dagger$ & 0.486$\pm$0.021$^\dagger$ & 0.491$\pm$0.014$^\dagger$ & 0.505$\pm$0.019$^\dagger$ & \underline{0.832$\pm$0.011}$^\approx$ & 0.764$\pm$0.012$^\dagger$ & 0.629$\pm$0.015$^\dagger$ & 0.527$\pm$0.022$^\dagger$ & 0.493$\pm$0.020$^\dagger$ & 0.488$\pm$0.015$^\dagger$ & \textbf{0.837} $\pm$ \textbf{0.012} \\ 
& NMI & 0.574$\pm$0.011$^\dagger$ & 0.074$\pm$0.013$^\dagger$ & 0.640$\pm$0.013$^\dagger$ & 0.747$\pm$0.012$^\dagger$ & 0.774$\pm$0.014$^\dagger$ & \underline{0.781$\pm$0.014}$^\approx$ & 0.714$\pm$0.012$^\dagger$ & 0.774$\pm$0.011$^\dagger$ & 0.556$\pm$0.015$^\dagger$ & 0.762$\pm$0.013$^\dagger$ & 0.750$\pm$0.012$^\dagger$ & \textbf{0.794 $\pm$ 0.013} \\
\midrule
\multirow{2}{*}{Cure-t2-4k} 
& ARI & 0.428$\pm$0.011$^\dagger$ & 0.068$\pm$0.010$^\dagger$ & 0.447$\pm$0.013$^\dagger$ & 0.438$\pm$0.012$^\dagger$ & \underline{0.532$\pm$0.010}$^\dagger$ & 0.437$\pm$0.013$^\dagger$ & 0.341$\pm$0.012$^\dagger$ & 0.482$\pm$0.013$^\dagger$ & 0.421$\pm$0.010$^\dagger$ & 0.503$\pm$0.012$^\dagger$ & 0.427$\pm$0.013$^\dagger$ & \textbf{0.581} $\pm$ \textbf{0.023} \\
& NMI & 0.607$\pm$0.011$^\dagger$ & 0.072$\pm$0.011$^\dagger$ & 0.608$\pm$0.012$^\dagger$ & \underline{0.666$\pm$0.013}$^\approx$ & 0.649$\pm$0.012$^\dagger$ & 0.663$\pm$0.011$^\dagger$ & 0.542$\pm$0.012$^\dagger$ & 0.594$\pm$0.012$^\dagger$ & 0.433$\pm$0.012$^\dagger$ & 0.617$\pm$0.013$^\dagger$ & 0.651$\pm$0.014$^\dagger$ & \textbf{0.676} $\pm$ \textbf{0.018} \\
\midrule
\multirow{2}{*}{Smile1} 
& ARI & 0.542$\pm$0.013$^\dagger$ & 0.142$\pm$0.010$^\dagger$ & 0.551$\pm$0.014$^\dagger$ & 0.527$\pm$0.011$^\dagger$ & 0.551$\pm$0.013$^\dagger$ & 0.615$\pm$0.012$^\approx$ & 0.478$\pm$0.011$^\dagger$ & \underline{0.618$\pm$0.013}$^\approx$ & 0.422$\pm$0.012$^\dagger$ & 0.548$\pm$0.012$^\dagger$ & 0.544$\pm$0.017$^\dagger$ & \textbf{0.621} $\pm$ \textbf{0.013} \\
& NMI & 0.606$\pm$0.012$^\dagger$ & 0.325$\pm$0.013$^\dagger$ & 0.612$\pm$0.012$^\dagger$ & 0.559$\pm$0.012$^\dagger$ & 0.607$\pm$0.013$^\dagger$ & 0.677$\pm$0.012$^\dagger$ & 0.534$\pm$0.012$^\dagger$ & \underline{0.686$\pm$0.013}$^\approx$ & 0.559$\pm$0.013$^\dagger$ & 0.607$\pm$0.013$^\dagger$ & 0.608$\pm$0.035$^\dagger$ & \textbf{0.701} $\pm$ \textbf{0.012} \\
\midrule
\multirow{2}{*}{Iris} & ARI & \underline{0.810 $\pm$ 0.012}$^\approx$ & 0.468 $\pm$ 0.071$^\dagger$ & 0.680 $\pm$ 0.058$^\dagger$ & 0.742 $\pm$ 0.023$^\dagger$ & 0.800 $\pm$ 0.015$^\dagger$ & 0.517 $\pm$ 0.015$^\dagger$ & 0.508 $\pm$ 0.015$^\dagger$ & 0.759 $\pm$ 0.015$^\dagger$ & 0.604 $\pm$ 0.015$^\dagger$ & 0.742 $\pm$ 0.021$^\dagger$ & 0.667 $\pm$ 0.046$^\dagger$ & \textbf{0.812} $\pm$ \textbf{0.015} \\
 & NMI & \underline{0.815 $\pm$ 0.015}$^\approx$ & 0.548 $\pm$ 0.065$^\dagger$ & 0.725 $\pm$ 0.051$^\dagger$ & 0.756 $\pm$ 0.027$^\dagger$ & 0.788 $\pm$ 0.019$^\dagger$ & 0.574 $\pm$ 0.015$^\dagger$ & 0.593 $\pm$ 0.015$^\dagger$ & 0.728 $\pm$ 0.015$^\dagger$ & 0.616 $\pm$ 0.015$^\dagger$ & 0.756 $\pm$ 0.025$^\dagger$ & 0.678 $\pm$ 0.055$^\dagger$ & \textbf{0.817} $\pm$ \textbf{0.014} \\
\midrule
\multirow{2}{*}{Glass} & ARI & 0.228 $\pm$ 0.045$^\dagger$ & 0.005 $\pm$ 0.010$^\dagger$ & 0.129 $\pm$ 0.022$^\dagger$ & 0.270 $\pm$ 0.041$^\dagger$ & 0.210 $\pm$ 0.047$^\dagger$ & \underline{0.448 $\pm$ 0.015}$^\dagger$ & 0.168 $\pm$ 0.015$^\dagger$ & 0.158 $\pm$ 0.015$^\dagger$ & 0.113 $\pm$ 0.015$^\dagger$ & 0.264 $\pm$ 0.052$^\dagger$ & 0.176 $\pm$ 0.065$^\dagger$ & \textbf{0.598} $\pm$ \textbf{0.007} \\
 & NMI & 0.334 $\pm$ 0.038$^\dagger$ & 0.035 $\pm$ 0.016$^\dagger$ & 0.156 $\pm$ 0.026$^\dagger$ & 0.428 $\pm$ 0.036$^\dagger$ & 0.356 $\pm$ 0.042$^\dagger$ & \underline{0.491 $\pm$ 0.015}$^\dagger$ & 0.297 $\pm$ 0.015$^\dagger$ & 0.311 $\pm$ 0.015$^\dagger$ & 0.151 $\pm$ 0.015$^\dagger$ & 0.398 $\pm$ 0.047$^\dagger$ & 0.302 $\pm$ 0.028$^\dagger$ & \textbf{0.609} $\pm$ \textbf{0.016} \\
\midrule
\multirow{2}{*}{Ecoli} & ARI & 0.378 $\pm$ 0.041$^\dagger$ & 0.390 $\pm$ 0.047$^\dagger$ & 0.293 $\pm$ 0.028$^\dagger$ & 0.384 $\pm$ 0.039$^\dagger$ & 0.390 $\pm$ 0.038$^\dagger$ & 0.354 $\pm$ 0.015$^\dagger$ & \underline{0.465 $\pm$ 0.015}$^\dagger$ & 0.286 $\pm$ 0.015$^\dagger$ & 0.351 $\pm$ 0.015$^\dagger$ & 0.424 $\pm$ 0.057$^\dagger$ & 0.384 $\pm$ 0.045$^\dagger$ & \textbf{0.518} $\pm$ \textbf{0.006} \\
 & NMI & 0.472 $\pm$ 0.034$^\dagger$ & 0.349 $\pm$ 0.061$^\dagger$ & 0.324 $\pm$ 0.034$^\dagger$ & 0.534 $\pm$ 0.035$^\dagger$ & 0.525 $\pm$ 0.031$^\dagger$ & 0.456 $\pm$ 0.015$^\dagger$ & 0.487 $\pm$ 0.015$^\dagger$ & 0.504 $\pm$ 0.015$^\dagger$ & 0.466 $\pm$ 0.015$^\dagger$ & \underline{0.547 $\pm$ 0.039}$^\dagger$ & 0.576 $\pm$ 0.025$^\dagger$ & \textbf{0.591} $\pm$ \textbf{0.010} \\
\midrule
\multirow{2}{*}{Wine} & ARI & 0.366 $\pm$ 0.032$^\dagger$ & 0.042 $\pm$ 0.019$^\dagger$ & 0.287 $\pm$ 0.015$^\dagger$ & 0.352 $\pm$ 0.042$^\dagger$ & 0.365 $\pm$ 0.036$^\dagger$ & 0.408 $\pm$ 0.015$^\approx$ & 0.351 $\pm$ 0.015$^\dagger$ & 0.391 $\pm$ 0.015$^\dagger$ & 0.295 $\pm$ 0.015$^\dagger$ & 0.375 $\pm$ 0.038$^\dagger$ & \textbf{0.421} $\pm$ \textbf{0.012}$^\approx$ & \underline{0.412 $\pm$ 0.011} \\
 & NMI & 0.425 $\pm$ 0.036$^\dagger$ & 0.089 $\pm$ 0.024$^\dagger$ & 0.320 $\pm$ 0.019$^\dagger$ & 0.423 $\pm$ 0.037$^\dagger$ & 0.430 $\pm$ 0.032$^\dagger$ & \textbf{0.498 $\pm$ 0.015}$^\approx$ & 0.365 $\pm$ 0.015$^\dagger$ & 0.428 $\pm$ 0.015$^\dagger$ & 0.351 $\pm$ 0.015$^\dagger$ & 0.428 $\pm$ 0.034$^\dagger$ & \underline{0.481 $\pm$ 0.016}$^\approx$ & 0.457 $\pm$ 0.012 \\
\midrule
\multirow{2}{*}{Zoo} & ARI & 0.527 $\pm$ 0.025$^\dagger$ & 0.302 $\pm$ 0.054$^\dagger$ & 0.157 $\pm$ 0.067$^\dagger$ & \underline{0.714 $\pm$ 0.022}$^\approx$ & 0.680 $\pm$ 0.044$^\dagger$ & 0.343 $\pm$ 0.015$^\dagger$ & 0.675 $\pm$ 0.015$^\dagger$ & 0.591 $\pm$ 0.015$^\dagger$ & 0.611 $\pm$ 0.015$^\dagger$ & 0.447 $\pm$ 0.062$^\dagger$ & 0.671 $\pm$ 0.035$^\dagger$ & \textbf{0.726 $\pm$ 0.018} \\
 & NMI & 0.543 $\pm$ 0.029$^\dagger$ & 0.457 $\pm$ 0.049$^\dagger$ & 0.376 $\pm$ 0.058$^\dagger$ & \underline{0.782 $\pm$ 0.025}$^\approx$ & 0.747 $\pm$ 0.039$^\dagger$ & 0.427 $\pm$ 0.015$^\dagger$ & 0.711 $\pm$ 0.015$^\dagger$ & 0.615 $\pm$ 0.015$^\dagger$ & 0.664 $\pm$ 0.015$^\dagger$ & 0.482 $\pm$ 0.058$^\dagger$ & 0.723 $\pm$ 0.045$^\dagger$ & \textbf{0.788} $\pm$ \textbf{0.025} \\
\midrule
\multirow{2}{*}{Flights (5k)} & ARI & 0.024 $\pm$ 0.008$^\dagger$ & -0.003 $\pm$ 0.006$^\dagger$ & 0.014 $\pm$ 0.006$^\dagger$ & 0.013 $\pm$ 0.007$^\dagger$ & 0.015 $\pm$ 0.010$^\dagger$ & \underline{0.035 $\pm$ 0.012}$^\approx$ & 0.002 $\pm$ 0.004$^\dagger$ & 0.021 $\pm$ 0.012$^\dagger$ & 0.013 $\pm$ 0.007$^\dagger$ & 0.029 $\pm$ 0.006$^\dagger$ & 0.007 $\pm$ 0.015$^\dagger$ & \textbf{0.042} $\pm$ \textbf{0.014} \\
 & NMI & 0.017 $\pm$ 0.006$^\dagger$ & 0.031 $\pm$ 0.012$^\dagger$ & 0.011 $\pm$ 0.006$^\dagger$ & 0.026 $\pm$ 0.008$^\dagger$ & 0.034 $\pm$ 0.010$^\dagger$ & \underline{0.048 $\pm$ 0.012}$^\approx$ & 0.014 $\pm$ 0.006$^\dagger$ & 0.024 $\pm$ 0.009$^\dagger$ & 0.037 $\pm$ 0.015$^\dagger$ & 0.042 $\pm$ 0.015$^\dagger$ & 0.021 $\pm$ 0.011$^\dagger$ & \textbf{0.050} $\pm$ \textbf{0.016} \\
\midrule
\multirow{2}{*}{Phishing (5k)} & ARI & 0.000 $\pm$ 0.003$^\dagger$ & 0.000 $\pm$ 0.015$^\dagger$ & 0.060 $\pm$ 0.046$^\dagger$ & 0.001 $\pm$ 0.001$^\dagger$ & 0.001 $\pm$ 0.002$^\dagger$ & 0.169 $\pm$ 0.015$^\dagger$ & 0.013 $\pm$ 0.009$^\dagger$ & 0.231 $\pm$ 0.015$^\dagger$ & 0.001 $\pm$ 0.002$^\dagger$ & 0.002 $\pm$ 0.001$^\dagger$ & \underline{0.307 $\pm$ 0.015}$^\approx$ & \textbf{0.316 $\pm$ 0.021} \\
& NMI & 0.010 $\pm$ 0.005$^\dagger$ & 0.003 $\pm$ 0.008$^\dagger$ & 0.069 $\pm$ 0.031$^\dagger$ & 0.001 $\pm$ 0.007$^\dagger$ & 0.003 $\pm$ 0.001$^\dagger$ & 0.137 $\pm$ 0.015$^\dagger$ & 0.035 $\pm$ 0.015$^\dagger$ & 0.179 $\pm$ 0.015$^\dagger$ & 0.012 $\pm$ 0.005$^\dagger$ & 0.005 $\pm$ 0.002$^\dagger$ & \underline{0.274 $\pm$ 0.012}$^\approx$ & \textbf{0.281 $\pm$ 0.018} \\
\midrule
\textbf{Average Rank} & & 6.975 & 10.687 & 8.375 & 6.112 & 5.375 & 4.262 & 7.687 & 6.562 & 8.137 & 5.812 & 4.937 & \textbf{1.457} \\
\bottomrule

\end{tabular}
\end{large}}
\begin{tablenotes}
    \item $^\dagger$ indicates a statistically significant difference between the performances of the corresponding algorithm and HypeFCM.
    \item $^\approx$ indicates the difference between performances of the corresponding algorithm and HypeFCM is not statistically significant.
\end{tablenotes}
\end{table*}

\section{Convergence Analysis of HypeFCM} \label{convergence}
We will introduce the objective function of HypeFCM. Let $\mathbf{X'} = \{\mathbf{x}_1', \mathbf{x}_2', \ldots \mathbf{x}_n'\}$ be a sample of $n$ observations in $\mathbb{D}_{\alpha}^{p}$. $\mathbf{V}$ is an $(c \times p)$ matrix that represents the centroids of the $c$ clusters in $\mathbb{D}_{\alpha}^{p}$ and $\mathbf{W}$ is an $(n \times c)$ membership matrix with elements $w_{ij} \in [0,1]$ such that the following least squares criterion is minimized. In this section, we offer insights into the convergence analysis of HypeFCM.
\noindent
\[ Minimize \hspace{5pt}
    J_m(\mathbf{X'},\mathbf{W},\mathbf{V}) = \sum_{i=1}^n \sum_{j=1}^c (w_{ij})^m d_{hyp}^2(\mathbf{x}_i', \mathbf{v}_j),
\]

    subject to:
    \[
    \sum_{j=1}^c w_{ij} = 1, \forall i \in [n],
    \]
    \[
    w_{ij} \geq 0, \forall i \in [n], j \in [c],
    \]
    \begin{equation} \label{eq:optimisation}
    \mathbf{x}_i', \mathbf{v}_j \in \mathbb{D}_\alpha^{p}, \forall i \in [n], j \in [c]. 
    \end{equation}

\noindent
\textbf{Membership Update.}
The membership weights $w_{ij}$ are updated as:

\begin{equation}
w_{ij} = \frac{d_{hyp}(\mathbf{x}_i', \mathbf{v}_j)^{-2/(m-1)}}{\sum_{l=1}^c d_{hyp}(\mathbf{x}_i', \mathbf{v}_l)^{-2/(m-1)}}.
\end{equation}

\noindent
\textbf{Centroid Update.}
The hyperbolic centroid update is computed using logarithmic and exponential maps on the Poincaré ball.
\begin{equation}
\mathbf{v}_j^{(t+1)} = \exp_{\mathbf{v}_j^{(t)}} \left( \frac{\sum_{i=1}^n (w_{ij})^m \cdot \log_{\mathbf{v}_j^{(t)}}(\mathbf{x}_i')}{\sum_{i=1}^n (w_{ij})^m} \right),
\end{equation}
where, $\log_{\mathbf{v}}(\cdot)$ and $\exp_{\mathbf{v}}(\cdot)$ are the logarithmic and exponential maps as described in \ref{eq:logmap}, \ref{eq:expmap}.
We show mathematically that this centroid updation belongs to the Poincaré ball. The centroids $\{\mathbf{v}^{(t)}_j\}_{j=1}^{c}$ using the Riemannian log-exp map as Equation \ref{eq:centroid}.
Let,
\[
\delta := \tanh\left( \frac{\sqrt{\alpha} \lambda_{\mathbf{v}_j} \|\bar{v}_j\|}{2} \right).
\]
\noindent
The hyperbolic tangent satisfies the following:
    \[
    \tanh(z) \in (0, 1), \quad \text{for } z > 0.
    \]
In our case,
    \[
    z = \frac{\sqrt{\alpha} \lambda_{\mathbf{v}_j} \|\bar{v}_j\|}{2} > 0,
    \]
    since:
    $\alpha > 0$ implies $\sqrt{\alpha} > 0$, $\lambda_{\mathbf{v}_j} = \frac{2}{1 - \alpha \|\mathbf{v}_j\|^2} > 0$ as $\|\mathbf{v}_j\| < \frac{1}{\sqrt{\alpha}}$, $\|\bar{v}_j\| \geq 0$, hence,
    \[
    \delta = \tanh\left( \frac{\sqrt{\alpha} \lambda_{\mathbf{v}_j} \|\bar{v}_j\|}{2} \right) < 1.
    \]
Thus, $\delta < 1 \implies \|\frac{\delta}{\sqrt{\alpha}} \cdot \frac{\bar{v}_j}{\|\bar{v}_j\|}\| < \frac{1}{\sqrt{\alpha}}$. Thus by Lemma \ref{bounded}, $\mathbf{v}_j^{(t)} \in \mathbb{D}_\alpha^p$. This shows that the centroid update belongs to the Poincaré disc.

\begin{lemma}\label{bounded}
   For any $\mathbf{x},\mathbf{y} \in \mathbb{D}_\alpha^{p}$:
$0 \leq d_{hyp}(\mathbf{x},\mathbf{y}) < \infty$. 
\end{lemma}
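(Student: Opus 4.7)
The strategy is to read the two bounds directly off the closed-form expression
\[
d_H(\mathbf{x},\mathbf{y}) = \frac{2}{\sqrt{\alpha}}\tanh^{-1}\!\bigl(\sqrt{\alpha}\,\|-\mathbf{x}\oplus_\alpha \mathbf{y}\|\bigr)
\]
given in Equation~(\ref{eq: hyberbolic}). The key structural fact I will invoke is that M\"obius addition is an internal operation on $\mathbb{D}_\alpha^p$, so that the argument fed into $\tanh^{-1}$ always lies in $[0,1)$, keeping the distance both non-negative and finite.

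For the lower bound, I would note that $\|-\mathbf{x}\oplus_\alpha \mathbf{y}\|\ge 0$ since it is a norm, and $\tanh^{-1}$ is non-negative on $[0,1)$ with $\tanh^{-1}(0)=0$. Multiplying by the positive constant $2/\sqrt{\alpha}$ preserves the sign, yielding $d_H(\mathbf{x},\mathbf{y})\ge 0$, with equality precisely when $-\mathbf{x}\oplus_\alpha \mathbf{y} = \mathbf{0}$, i.e.\ when $\mathbf{x}=\mathbf{y}$ in view of the identity $(-\mathbf{v})\oplus_\alpha \mathbf{v}=\mathbf{0}$ recorded just after Equation~(\ref{eq: addition}).

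For the upper bound, the main point is to show $\sqrt{\alpha}\,\|-\mathbf{x}\oplus_\alpha \mathbf{y}\|<1$ whenever $\mathbf{x},\mathbf{y}\in\mathbb{D}_\alpha^p$. I would establish closure of $\oplus_\alpha$ on $\mathbb{D}_\alpha^p$ by a direct manipulation of Equation~(\ref{eq: addition}): expanding the squared numerator and using the identity
\[
\bigl(1-\alpha\|\mathbf{v}\|^2\bigr)\bigl(1-\alpha\|\mathbf{w}\|^2\bigr) = 1+2\alpha\langle \mathbf{v},\mathbf{w}\rangle+\alpha^2\|\mathbf{v}\|^2\|\mathbf{w}\|^2 - \bigl(1-2\alpha\langle \mathbf{v},\mathbf{w}\rangle+2\alpha(\|\mathbf{v}\|^2+\|\mathbf{w}\|^2)- \ldots\bigr),
\]
one obtains the standard identity
\[
1-\alpha\|\mathbf{v}\oplus_\alpha \mathbf{w}\|^2 = \frac{(1-\alpha\|\mathbf{v}\|^2)(1-\alpha\|\mathbf{w}\|^2)}{\bigl(1+2\alpha\langle \mathbf{v},\mathbf{w}\rangle+\alpha^2\|\mathbf{v}\|^2\|\mathbf{w}\|^2\bigr)^2}.
\]
The right-hand side is strictly positive whenever $\alpha\|\mathbf{v}\|^2<1$ and $\alpha\|\mathbf{w}\|^2<1$, so $\alpha\|\mathbf{v}\oplus_\alpha \mathbf{w}\|^2<1$. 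Applying this with $\mathbf{v}=-\mathbf{x}$ and $\mathbf{w}=\mathbf{y}$ gives $\sqrt{\alpha}\,\|-\mathbf{x}\oplus_\alpha \mathbf{y}\|<1$, so $\tanh^{-1}$ is evaluated at a point strictly inside its domain of finiteness, and $d_H(\mathbf{x},\mathbf{y})<\infty$.

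The only mildly delicate step is verifying the denominator identity and that it never vanishes; everything else is monotonicity and sign chasing. I would dispatch the non-vanishing of $1+2\alpha\langle \mathbf{v},\mathbf{w}\rangle+\alpha^2\|\mathbf{v}\|^2\|\mathbf{w}\|^2$ by the Cauchy--Schwarz inequality, which gives $|2\alpha\langle \mathbf{v},\mathbf{w}\rangle|\le 2\alpha\|\mathbf{v}\|\|\mathbf{w}\|\le \alpha\|\mathbf{v}\|^2+\alpha\|\mathbf{w}\|^2<2$, keeping the expression bounded away from zero. This completes the proof.
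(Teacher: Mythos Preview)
Your approach coincides with the paper's: the in-text sketch there reads the lower bound off the non-negativity of the norm and of $\tanh^{-1}$ on $[0,1)$, and obtains finiteness from the closure $\|-\mathbf{x}\oplus_\alpha\mathbf{y}\|<1/\sqrt{\alpha}$, with details deferred to the supplement --- exactly the structure you propose. Two small corrections: the closure identity should read
\[
1-\alpha\|\mathbf{v}\oplus_\alpha\mathbf{w}\|^2=\frac{(1-\alpha\|\mathbf{v}\|^2)(1-\alpha\|\mathbf{w}\|^2)}{1+2\alpha\langle\mathbf{v},\mathbf{w}\rangle+\alpha^2\|\mathbf{v}\|^2\|\mathbf{w}\|^2}
\]
with the denominator to the first power (not squared), and your final Cauchy--Schwarz line is cleaner if you observe directly that $1+2\alpha\langle\mathbf{v},\mathbf{w}\rangle+\alpha^2\|\mathbf{v}\|^2\|\mathbf{w}\|^2\ge(1-\alpha\|\mathbf{v}\|\|\mathbf{w}\|)^2>0$, since the bound ``$<2$'' alone does not secure positivity.
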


\noindent
The detailed proof of this lemma is discussed in the Appendix A.  

\begin{figure*}[ht]
    \centering
    \includegraphics[width= \linewidth]{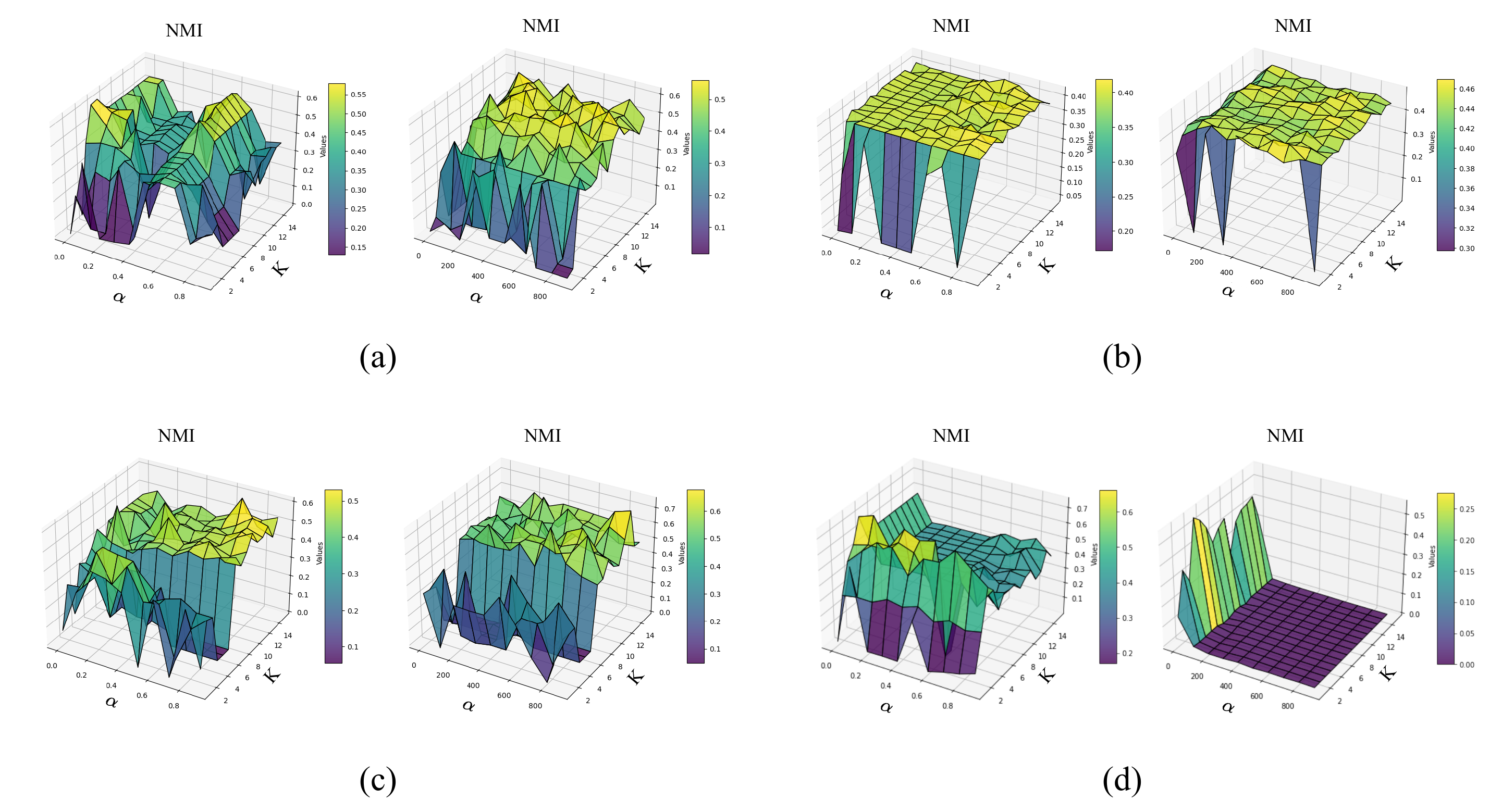}
    \caption{Grid search plots of $2$ real-world datasets (a) Ecoli, (b) Wine and $2$ synthetic datasets (c) Smile1, (d) Cure-t1-2000n, for two ranges of curvatures ($\alpha$), $0$-$1$ and $1$-$1000$ and different filtration ($k$) values against the performance metric, NMI are presented above.}
    \label{fig:grid-plot}
\end{figure*}

\begin{lemma} \label{centroid}
For fixed weights $w_{ij}$, there exists a unique minimizer $\mathbf{v}_j^*$ of the function:
\[
f(\mathbf{v}_j) = \sum_{i=1}^n (w_{ij})^m d_{hyp}^2(\mathbf{x}_i',\mathbf{v}_j).
\]
\end{lemma}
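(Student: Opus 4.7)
The plan is to separate the proof into existence and uniqueness and to exploit the fact that $(\mathbb{D}_\alpha^p, g^\alpha)$ is a Hadamard manifold: complete, simply connected, and of constant negative sectional curvature $-\alpha$. Throughout, assume at least one weight in column $j$ is strictly positive, otherwise $f \equiv 0$ and the statement is vacuous (in practice this is guaranteed by the Dirichlet initialization and by the simplex constraint $\sum_j w_{ij}=1$). Write $\beta_i = (w_{ij})^m \ge 0$ for brevity; then $f(\mathbf{v}) = \sum_i \beta_i\, d_H^2(\mathbf{x}_i,\mathbf{v})$.

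For existence I would first note that $f$ is continuous on $\mathbb{D}_\alpha^p$, since $d_H$ is continuous by the explicit formula in Equation~(7). The space is not compact, so I would establish coercivity: as $\mathbf{v}$ approaches the boundary sphere $\{\alpha\|\mathbf{v}\|^2 = 1\}$, one has $\|-\mathbf{x}_i \oplus_\alpha \mathbf{v}\| \to 1/\sqrt{\alpha}$ for every fixed interior point $\mathbf{x}_i$, and hence $d_H(\mathbf{x}_i,\mathbf{v}) = (2/\sqrt{\alpha})\tanh^{-1}(\sqrt{\alpha}\|-\mathbf{x}_i \oplus_\alpha \mathbf{v}\|) \to \infty$. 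Consequently $f(\mathbf{v})\to\infty$ along any sequence tending to the boundary, so for any $M$ the sublevel set $\{f \le M\}$ is a closed subset of some compact hyperbolic ball inside $\mathbb{D}_\alpha^p$. A continuous function on a nonempty compact set attains its infimum, yielding existence of $\mathbf{v}_j^*$.

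For uniqueness I would invoke the classical Hadamard-manifold fact that the squared distance function $\mathbf{v} \mapsto d_H^2(\mathbf{x},\mathbf{v})$ is strictly geodesically convex: for any unit-speed geodesic $\gamma:[0,1]\to \mathbb{D}_\alpha^p$ and any fixed $\mathbf{x}$, the map $t \mapsto d_H^2(\mathbf{x},\gamma(t))$ satisfies $\tfrac{d^2}{dt^2}d_H^2(\mathbf{x},\gamma(t)) \ge 2\|\gamma'(t)\|^2 > 0$. This is a consequence of the Rauch/Hessian comparison in nonpositive curvature (see e.g.\ Karcher's work on the Riemannian center of mass, or Jost, \emph{Riemannian Geometry and Geometric Analysis}). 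Taking a nonnegative linear combination preserves strict convexity whenever at least one coefficient $\beta_i$ is positive, so $f$ is strictly geodesically convex. If two minimizers $\mathbf{v}^\star_1 \ne \mathbf{v}^\star_2$ existed, the unique connecting geodesic (unique because $\mathbb{D}_\alpha^p$ is Hadamard, so the exponential map is a global diffeomorphism) would yield $f(\gamma(1/2)) < \tfrac{1}{2}f(\mathbf{v}^\star_1)+\tfrac{1}{2}f(\mathbf{v}^\star_2) = \min f$, a contradiction.

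The main obstacle is the verification of the strict geodesic convexity estimate $(d_H^2\circ\gamma)''(t) \ge 2\|\gamma'(t)\|^2$ directly in the Poincaré ball, rather than merely citing it. If a self-contained derivation is desired, I would parametrize $\gamma$ by the exponential map based at $\mathbf{x}$, write $d_H^2(\mathbf{x},\gamma(t)) = \|\log_{\mathbf{x}}\gamma(t)\|^2_{g^\alpha_{\mathbf{x}}}$, and differentiate twice using the Gauss lemma together with the Jacobi-field bound $\|J(t)\| \ge \|J(0)\|$ that holds in nonpositive curvature; this yields the required lower bound. Everything else—continuity, coercivity via $\tanh^{-1}$, and the convex-combination step—is routine given Lemma~\ref{bounded} and Equation~(7).
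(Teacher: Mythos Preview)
Your argument is correct and is exactly the standard route for weighted Fr\'echet means on Hadamard manifolds: existence via continuity plus coercivity of $d_H^2$ (using that $\tanh^{-1}$ blows up at $1/\sqrt{\alpha}$), and uniqueness via strict geodesic convexity of $\mathbf{v}\mapsto d_H^2(\mathbf{x},\mathbf{v})$ in nonpositive curvature. The paper itself defers the proof of this lemma to its supplementary document, so no in-text proof is available for direct comparison; however, given the paper's later reliance on Karcher-style centroid updates and its citation of Bonnabel's alternating-minimization framework on manifolds, the supplementary proof almost certainly follows the same Hadamard/Karcher template you used. Your inclusion of the caveat that at least one $\beta_i>0$ is needed, and your sketch of how to verify the Hessian bound $(d_H^2\circ\gamma)''\ge 2\|\gamma'\|^2$ via Jacobi fields if a self-contained argument is wanted, are both good additions that the paper likely omits.
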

\noindent
The proof of this lemma is discussed in Appendix A.
\begin{theorem}
    Let $J_m^{(t)} = J_m(\mathbf{W}^{(t)},\mathbf{V}^{(t)})$. The sequence $\{J_m^{(t)}\}_{t=0}^{\infty}$ converges to a local minimum.
\end{theorem}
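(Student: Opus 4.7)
The plan is to prove convergence via the classical Zangwill-style alternating minimization framework: show that each of the two block updates (weights with centroids fixed, centroids with weights fixed) does not increase $J_m$, so that the real sequence $\{J_m^{(t)}\}$ is monotonically non-increasing, and then use a lower bound on $J_m$ to invoke the monotone convergence theorem. Lemma \ref{bounded} supplies the lower bound immediately: since $d_H^2(\mathbf{x}_i,\mathbf{v}_j)\ge 0$ and $(w_{ij})^m \ge 0$, every summand is non-negative and hence $J_m \ge 0$.

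For the weight step, I would show that with $\mathbf{V}^{(t)}$ fixed the update in Equation \ref{eq:wt_matrix} is the unique global minimizer of $\mathbf{W}\mapsto J_m(\mathbf{W},\mathbf{V}^{(t)})$ subject to the row-simplex constraints. The Lagrangian approach, introducing a multiplier $\lambda_i$ per row, gives $m(w_{ij})^{m-1}d_H^2(\mathbf{x}_i,\mathbf{v}_j)=\lambda_i$; solving together with $\sum_j w_{ij}=1$ recovers exactly Equation \ref{eq:wt_matrix}. Because $t\mapsto t^m$ is strictly convex for $m\ge 2$, the restricted objective is strictly convex on the simplex, so the KKT point is the unique minimizer. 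This yields $J_m(\mathbf{W}^{(t+1)},\mathbf{V}^{(t)})\le J_m(\mathbf{W}^{(t)},\mathbf{V}^{(t)})$. For the centroid step, Lemma 2 guarantees a unique minimizer $\mathbf{v}_j^*$ of $f(\mathbf{v}_j)=\sum_i (w_{ij})^m d_H^2(\mathbf{x}_i,\mathbf{v}_j)$; I would then observe that the Riemannian-gradient characterization of the weighted Fr\'echet mean on the Poincar\'e disc is precisely the exponential–logarithm fixed-point in Equation \ref{eq:centroid}, so the centroid update moves $J_m$ monotonically downward toward this minimizer. Chaining the two inequalities gives $J_m^{(t+1)}\le J_m^{(t)}$, and together with $J_m\ge 0$ the monotone convergence theorem produces a limit $J_m^{*}\ge 0$.

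The main obstacle is upgrading convergence of objective values to convergence to a local minimum of $J_m$ rather than a mere stationary point. The plan here has four substeps. First, show the iterates stay in a compact set: the weight rows live in the probability simplex, which is compact, and the centroids remain in a bounded geodesic ball of $\mathbb{D}_\alpha^p$ strictly inside the Poincar\'e boundary (a weighted Fr\'echet mean cannot escape the convex hull of the data in the hyperbolic sense, hence stays away from the ideal boundary where $d_H$ blows up). Second, note that the two update maps are continuous on this compact region, since M\"obius addition and the log/exp maps are smooth away from the boundary and away from coincidences $\mathbf{x}_i=\mathbf{v}_j$ (which can be excluded generically or handled by the convention $0/0 \to 1$ in the weight formula). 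Third, extract a convergent subsequence $(\mathbf{W}^{(t_k)},\mathbf{V}^{(t_k)})\to(\mathbf{W}^{*},\mathbf{V}^{*})$ and pass to the limit in the per-block first-order optimality conditions to conclude that $(\mathbf{W}^{*},\mathbf{V}^{*})$ satisfies the KKT system of the constrained problem (eq:optimisation). Fourth, invoke the strict convexity of $J_m$ in $\mathbf{W}$ and the uniqueness from Lemma 2 in $\mathbf{V}$: at a non-minimal KKT point the descent in at least one block would be strict, contradicting $J_m^{(t)}\to J_m^{*}$. Hence $(\mathbf{W}^{*},\mathbf{V}^{*})$ is a local minimum.

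A subtlety I would flag before closing is the filtration step: the stated objective in (eq:optimisation) does not include the filtration, whereas Algorithm \ref{main_algo} applies it inside the weight update. For a clean convergence statement I would either (i) analyze the unfiltered variant, for which $\mathbf{W}^{(t+1)}$ is the exact per-block minimizer as above, or (ii) reinterpret filtration as a hard constraint $w_{ij}=0$ for $(i,j)\notin \mathbf{R}$; the constraint set then becomes a face of the simplex, and the same KKT and strict-convexity argument goes through on this restricted polytope (with the caveat that if $\mathbf{R}$ itself changes between iterations, one only gets monotone descent within each fixed-support epoch, and an additional argument is needed to show that the support $\mathbf{R}^{(t)}$ stabilizes after finitely many iterations so that eventual monotonicity holds).
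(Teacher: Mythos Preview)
Your approach is essentially the paper's: Lagrangian derivation of the weight update, Riemannian Fr\'echet-mean/log--exp characterization of the centroid update, chaining the two descent inequalities, and then monotone convergence using the lower bound from Lemma~\ref{bounded}. You are in fact more careful than the paper on two fronts. First, the paper's proof stops at ``converges to a stationary point'' (citing alternating minimization on manifolds) and never actually upgrades to a local minimum, whereas you sketch a compactness/subsequence/KKT route; second, your treatment of filtration (either drop it for the analysis, or view it as restricting to a face of the simplex with a support-stabilization caveat) is sharper than the paper's single majorization inequality. One warning on your Step~4: block-wise strict convexity plus block-wise uniqueness only guarantees that a limit point is a \emph{coordinate-wise} minimum; it does not by itself exclude saddle points of the joint (non-jointly-convex) objective, so the jump to ``local minimum'' is not fully justified---but this gap is inherited from the theorem statement and the paper's own proof does not close it either.
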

\begin{proof}
We have proved step by step the convergence analysis of the proposed objective function \ref{eq:optimisation}. 

\noindent
\textbf{Step 1: Descent Property of Membership Update.} \\
We aim to show that the HypeFCM algorithm converges to a stationary point of the following objective function defined on the Poincaré ball model $\mathbb{D}_\alpha^p$:
    For fixed $\mathbf{V}^{(t)}$, the weight update minimizes $J_m$ subject to constraints in Equation \ref{eq:optimisation}. \\
For each point $i$, form the Lagrangian
\begin{equation}
    L_i(w_{i1},...,w_{ic},\lambda_i) = \sum_{j=1}^c (w_{ij})^m d_{hyp}^2(\mathbf{x}_i',\mathbf{v}_j^{(t)}) + \lambda_i(\sum_{j=1}^c w_{ij} - 1).
\end{equation}
Taking partial derivatives with respect to $w_{ij}$:

$$
\frac{\partial {L_i}}{\partial {w_{ij}}} = m(w_{ij})^{m-1}d_{hyp}^2(\mathbf{x}_i',\mathbf{v}_j^{(t)}) + \lambda_i = 0.
$$
From the above, the following can be deduced,\\
$$m(w_{ij})^{m-1}d_{hyp}^2(\mathbf{x}_i',\mathbf{v}_j^{(t)}) + \lambda_i = 0$$
$$ \implies w_{ij} = (-\frac{\lambda_i}{m d_{hyp}^2(\mathbf{x}_i',\mathbf{v}_j^{(t)})})^{1/(m-1)}.
$$
Applying the sum Constraint
$\sum_{j=1}^c w_{ij} = 1$:

$$
    \sum_{j=1}^c (-\frac{\lambda_i}{m d_{hyp}^2(\mathbf{x}_i',\mathbf{v}_j^{(t)})})^{1/(m-1)} = 1.
$$
Let $\alpha_i = (-\frac{\lambda_i}{m})^{1/(m-1)}$, then:

$$
\alpha_i \sum_{j=1}^c (d_{hyp}^2(\mathbf{x}_i',\mathbf{v}_j^{(t)}))^{-1/(m-1)} = 1.
$$
Therefore,
$$\alpha_i = \frac{1}{\sum_{j=1}^c (d_{hyp}^2(\mathbf{x}_i',\mathbf{v}_j^{(t)}))^{-1/(m-1)}}.$$
Hence, as the final update, the following is obtained:

$$w_{ij}^{(t+1)} = \frac{d_{hyp}(\mathbf{x}_i',\mathbf{v}_j^{(t)})^{-2/(m-1)}}{\sum_{j=1}^c d_{hyp}(\mathbf{x}_i', \mathbf{v}_j^{(t)})^{-2/(m-1)}}.$$
Therefore,
\[
J_m(\mathbf{W}^{(t+1)}, \mathbf{V}^{(t)}) \leq J_m(\mathbf{W}^{(t)}, \mathbf{V}^{(t)}).
\]
\noindent
\textbf{Step 2: Descent Property of Centroid Update.} \\
For fixed weights $\{w_{ij}^{(t+1)}\}$, the centroid update is given by the Riemannian weighted Fréchet mean:
\begin{equation}
\mathbf{v}_j^{(t+1)} = \exp_{\mathbf{v}_j^{(t)}}\left( \frac{\sum_{i=1}^n (w_{ij}^{(t+1)})^m \log_{\mathbf{v}_j^{(t)}}(\mathbf{x}_i')}{\sum_{i=1}^n (w_{ij}^{(t+1)})^m} \right),
\end{equation}
where $\log_{\mathbf{v}}(\cdot)$ and $\exp_{\mathbf{v}}(\cdot)$ are the logarithmic and exponential maps at $\mathbf{v}$ in the Poincaré Disc model. This update minimizes the cost function concerning $\mathbf{v}_j$ under Riemannian geometry, which conforms with the one in Equation \ref{eq:centroid}.
For fixed $\mathbf{W}^{(t+1)}$, the centroid update minimizes
$$
    J_m:\mathbf{v}_j^{(t+1)} = \argmin_{\mathbf{v}_j \in \mathbb{D}_\alpha^{p}} \sum_{i=1}^n (w_{ij}^{(t+1)})^m d_{hyp}^2(\mathbf{x}_i',\mathbf{v}_j).
$$ 
\[
\because J_m(\mathbf{W}^{(t+1)}, \mathbf{V}^{(t+1)}) \leq J_m(\mathbf{W}^{(t+1)}, \mathbf{V}^{(t)}).
\] 
Therefore,
\begin{equation} \label{eq:monotone}
  J_m^{(t)} \geq J_m(\mathbf{W}^{(t+1)},\mathbf{V}^{(t)}) \geq J_m(\mathbf{W}^{(t+1)},\mathbf{V}^{(t+1)}) = J_m^{(t+1)}.  
\end{equation}
\noindent
\textbf{Step 3: Monotonicity and Convergence.} \\
    $\{J_m^{(t)}\}$ is monotonically decreasing sequence in t by Equation \ref{eq:monotone} and $J_m^{(t)} \geq 0$ by Lemma \ref{bounded}. Therefore, $\{J_m^{(t)}\}$ converges to some $J^* \geq 0$, by Monotone Convergence theorem \cite{rudin}.\\ The filtration step maintains the monotonicity as it only considers $k$ nearest points from each of the centroids at each iteration. Let $\mathbf{U}'^{(t)}$ be the filtered distance matrix, then:
    $$
    \sum_{i=1}^n \sum_{j=1}^c u'^{(t)}_{ij} \leq \sum_{i=1}^n \sum_{j=1}^c u^{(t)}_{ij}.
    $$
Therefore, the unfiltered objective acts as a pointwise majorizer to the one with filtration.
Under these conditions, and by properties of alternating minimization on manifolds \cite{bonnabel}, the sequence of updates converges to a stationary point of the objective function.
\end{proof}

\section{experiments} \label{experiments}
\subsection{Details of the Datasets}
We validate the efficacy of HypeFCM on $6$ synthetic and $12$ real-world datasets. Iris, Glass, Ecoli, Wine, Wisconsin B.C., Phishing URL, Abalone, Glass, Zoo, ORHD (Optical Recognition of Handwritten Digits) datasets are taken from UCI machine learning repository \cite{dua2017uci}; Flights, MNIST datasets are taken from the Kaggle, and Cure-t1-2000n-2D, Cure-t2-4k, Donutcurves, Disk-4000n, Smile1, 3MC are taken from Clustering Benchmark datasets available at\\
https://github.com/deric/clustering-benchmark. Experiments on some of these datasets are shown in Appendix B.
\subsection{Experimental Setup \& Baselines}
The performance of HypeFCM is measured by involving two well-known performance metrics, namely Adjusted Rand Index (ARI) \cite{ari} and Normalized Mutual Information (NMI) \cite{nmi}.
\noindent
The notable base methods, namely the PCM \cite{pcm}, FCM \cite{fcm}, $k$-means \cite{kmeans}, MinibatchKmeans \cite{minibatch}, along with the State-of-the-Art methods like P\_SFCM \cite{psfcm}, EFKM \cite{efkm}, UFCER \cite{ufcer}, FCSR \cite{fcsr}, IFKMHC \cite{ifkmhc}, HSFC \cite{hsfc}, FCPFM \cite{fcpfmc} clustering methods are considered for comparison with our proposed HypeFCM. 


\subsection{Parameter Setting} 
The performance of HypeFCM depends on three key parameters: the curvature parameter ($\alpha$) of the Poincaré Disc, the value of the filtration ($k$), and the fuzziness parameter ($m$).

We have performed two separate $3$D grid search experiments for each dataset with two parameters (curvature ($\alpha$), filtration value ($k$) against the performance metric, NMI with one varying the curvature ($\alpha$) from $0$ to $1$ by increasing the step size of $0.1$ and the other from $1$ to $1000$ by increasing the step size of $100$ in each iteration on two real-world and two synthetic datasets. The grid search plots in Figure \ref{fig:grid-plot} show the performance of HypeFCM on two real-world datasets, Ecoli, Wine, and two synthetic datasets, Smile1, Cure-t1-2000n, respectively.

We also performed two separate $2$D plots for NMI with varying curvature ($\alpha$), for the values of filtration ($k$) varying from $1$ to $15$. We plotted them in two separate plots for $k = 1, 5, 10, 15$ in figure \ref{fig:real-curvature} for two real datasets: Ecoli, Wine, respectively. 
We analyze the performance from the figures and conclude that not much variation in performance is observed with the choice of curvature ($\alpha$) regarding the best choice of filtration value ($k$).

In all our experiments, we have considered the value of the fuzziness parameter ($m$) as $2$.

\begin{figure}[ht]
    \centering
    \includegraphics[width= 0.6\linewidth]{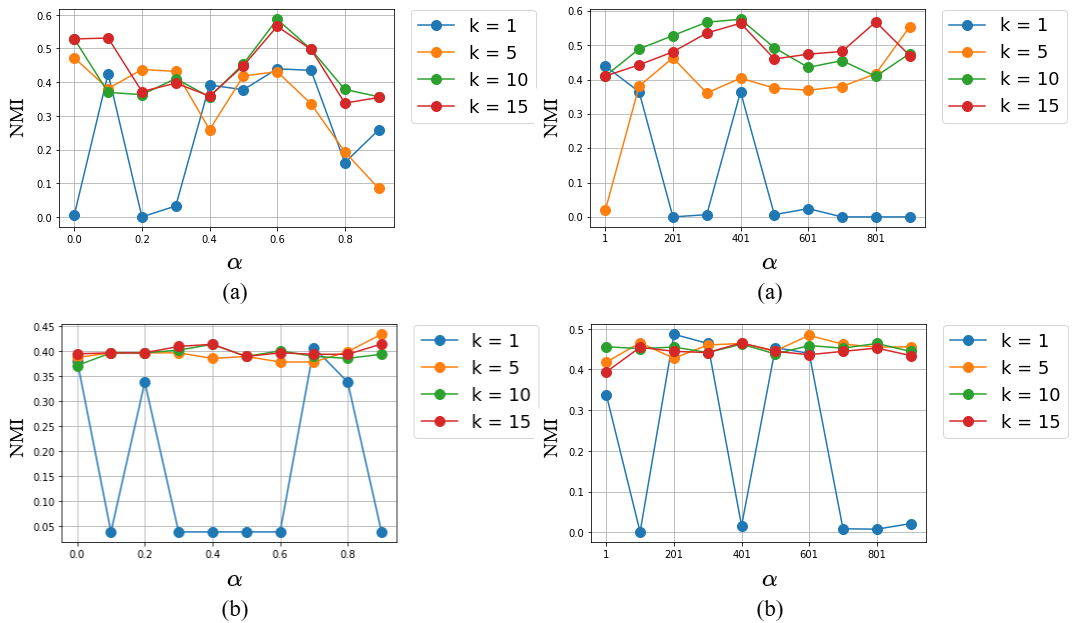}
    \caption{The performance metric NMI vs curvature ($\alpha$) plots of HypeFCM for two real-world datasets (a) Ecoli, (b) Wine respectively.}
    \label{fig:real-curvature}
\end{figure}

\subsection{Experiment on Datasets}
We carried out experiments on $6$ synthetic datasets and $12$ real-world datasets. Performances on some of them are presented in Table \ref{tab:dataset} and the rest of them in the appendix. Our HypeFCM algorithm underscores almost all contenders in terms of ARI and NMI. We have demonstrated the t-SNE visualization \cite{tsne} of HypeFCM in Figure \ref{fig:real-hfcm} for two real-world datasets, Glass, and Wisconsin Breast Cancer. The Average Rank in Table \ref{tab:dataset} is just the average of the ranks of ARI and NMI values obtained from the executions of each method for $15$ times, all initialized with the same seeding at a time. We calculate the $p$-values from the Wilcoxon signed-rank test \cite{wilcoxon} for the methods compared to HypeFCM and represent them in a table in the appendix. Smaller $p$-values ($\leq 0.05$) indicate a statistically significant difference between the performances of the corresponding algorithm and HypeFCM, denoted by $^\dagger$ or else $^\approx$ denotes there is no statistically significant difference. 



\subsection{Ablation Studies}
We conducted an ablation study by varying the key parameter, \textbf{Filtration}. \\
\noindent
\textbf{HypeFCM with and without Filtration.} We conducted two separate experiments, one with filtration and the other without using any filtration on two real-world datasets, Wine, Ecoli, and two synthetic datasets, Cure-t1-2000n and Smile1, respectively. We have performed these experiments by setting the curvature parameter $(\alpha)$ as $1$. Figure \ref{fig:ablation} shows that using filtration improves the performance of HypeFCM compared to the same without filtration.

\begin{figure}
    \centering
    \includegraphics[width= 0.5\linewidth]{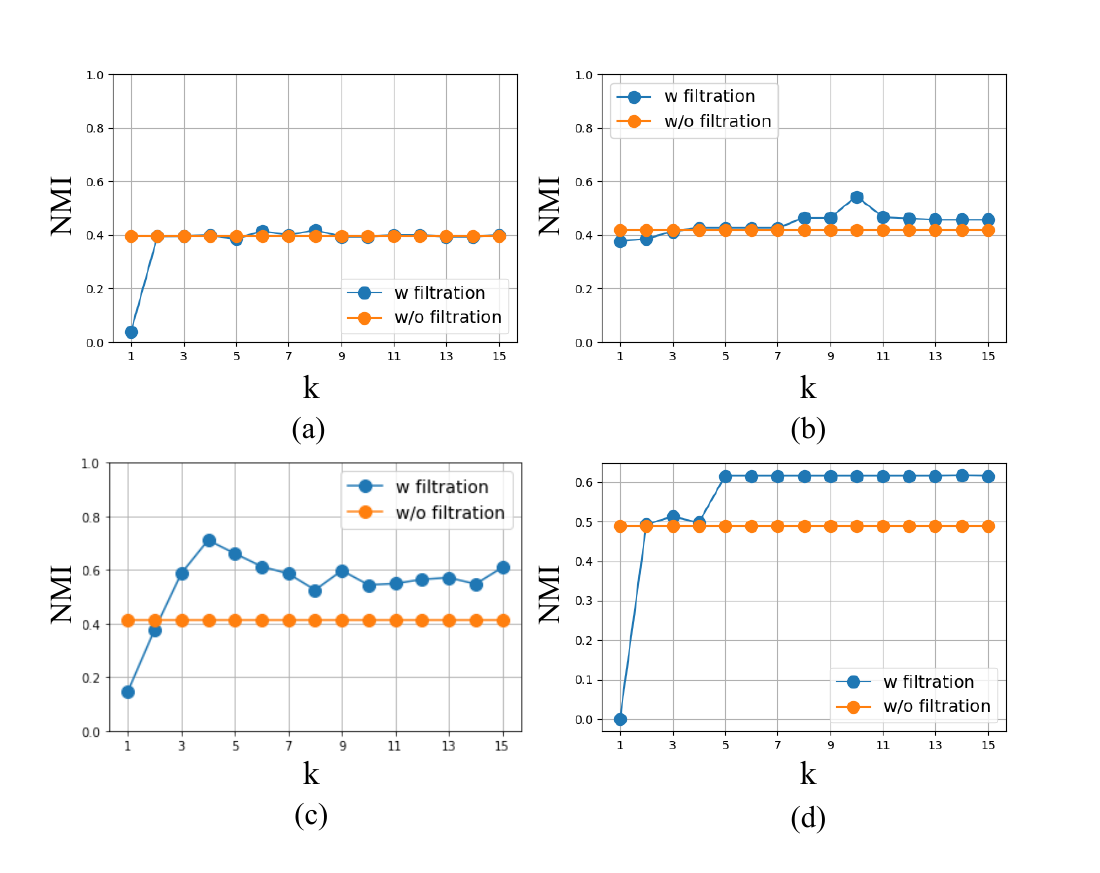}
    \caption{Comparison of Clustering Performance, NMI vs $k$ with and without filtration for two real-world (a) Wine, (b) Ecoli, and two synthetic (c) Cure-t1-2000n-2D, (d) Smile1 datasets respectively.}
    \label{fig:ablation}
\end{figure}

\section{Discussions.} \label{discussions}
The performance results of HypeFCM demonstrate its effectiveness across both synthetic and real-world datasets. The algorithm's superior performance in terms of ARI and NMI metrics against competing methods validates the fundamental hypothesis that hyperbolic geometry can better capture complex hierarchical relationships in clustering tasks. The significant performance of HypeFCM on $12$ real-world benchmark datasets, as visualized through t-SNE for Glass, Wisconsin B.C. datasets, demonstrates how effectively HypeFCM preserves cluster boundaries and captures the inherent hierarchical structures that often characterize real-world datasets. 
Integrating hyperbolic geometry via the Poincaré Disc model represents a significant methodological advancement. The model's exponentially expanding space toward its boundary naturally accommodates hierarchical structures that are prevalent in many real-world datasets but challenging to represent in Euclidean space. This geometric advantage, combined with fuzzy clustering principles, allows HypeFCM to capture subtle hierarchical relationships that traditional fuzzy clustering methods cannot properly identify.

\section{Conclusion \& Future works.} \label{conclusion}
Our proposed HypeFCM marks a significant advancement in handling complex hierarchical data structures by seamlessly integrating hyperbolic geometry with the principles of fuzzy clustering. Our method exploits the unique properties of the Poincaré Disc model, which provides exponentially more space towards its boundary, making it naturally suited for representing hierarchical relationships. Incorporating a weight-based filtering mechanism enhances the algorithm's ability to focus on relevant geometric relationships while pruning less significant connections, thereby improving both computational efficiency and clustering accuracy. The algorithm's iterative optimization process, combining hyperbolic distance calculations with fuzzy membership updates, demonstrates robust performance in capturing complex structures that traditional Euclidean-based methods struggle to represent adequately. The experimental results validate the effectiveness of our approach, showing superior performance in preserving hierarchical relationships while maintaining computational feasibility. While the current results are promising, expanding this framework to handle streaming data and exploring integration with deep learning architectures may further enhance the algorithm's applicability across diverse domains where hierarchical relationships are intrinsic to the data structure.

\bibliographystyle{IEEEtran}
\bibliography{ref}

\begin{thebibliography}{10}
\providecommand{\url}[1]{#1}
\csname url@samestyle\endcsname
\providecommand{\newblock}{\relax}
\providecommand{\bibinfo}[2]{#2}
\providecommand{\BIBentrySTDinterwordspacing}{\spaceskip=0pt\relax}
\providecommand{\BIBentryALTinterwordstretchfactor}{4}
\providecommand{\BIBentryALTinterwordspacing}{\spaceskip=\fontdimen2\font plus
\BIBentryALTinterwordstretchfactor\fontdimen3\font minus \fontdimen4\font\relax}
\providecommand{\BIBforeignlanguage}[2]{{%
\expandafter\ifx\csname l@#1\endcsname\relax
\typeout{** WARNING: IEEEtran.bst: No hyphenation pattern has been}%
\typeout{** loaded for the language `#1'. Using the pattern for}%
\typeout{** the default language instead.}%
\else
\language=\csname l@#1\endcsname
\fi
#2}}
\providecommand{\BIBdecl}{\relax}
\BIBdecl

\bibitem{kmeans}
T.~Kanungo, D.~M. Mount, N.~S. Netanyahu, C.~D. Piatko, R.~Silverman, and A.~Y. Wu, ``An efficient k-means clustering algorithm: Analysis and implementation,'' \emph{IEEE transactions on pattern analysis and machine intelligence}, vol.~24, no.~7, pp. 881--892, 2002.

\bibitem{k-medoids}
H.-S. Park and C.-H. Jun, ``A simple and fast algorithm for k-medoids clustering,'' \emph{Expert systems with applications}, vol.~36, no.~2, pp. 3336--3341, 2009.

\bibitem{kharmonic}
B.~Zhang, M.~Hsu, and U.~Dayal, ``K-harmonic means-a data clustering algorithm,'' \emph{Hewlett-Packard Labs Technical Report HPL-1999-124}, vol.~55, 1999.

\bibitem{spectral}
U.~Von~Luxburg, ``A tutorial on spectral clustering,'' \emph{Statistics and computing}, vol.~17, pp. 395--416, 2007.

\bibitem{kernel}
I.~S. Dhillon, Y.~Guan, and B.~Kulis, ``Kernel k-means: spectral clustering and normalized cuts,'' in \emph{Proceedings of the tenth ACM SIGKDD international conference on Knowledge discovery and data mining}, 2004, pp. 551--556.

\bibitem{hierarchical}
F.~Nielsen and F.~Nielsen, ``Hierarchical clustering,'' \emph{Introduction to HPC with MPI for Data Science}, pp. 195--211, 2016.

\bibitem{agglomerative}
K.~C. Gowda and G.~Krishna, ``Agglomerative clustering using the concept of mutual nearest neighbourhood,'' \emph{Pattern recognition}, vol.~10, no.~2, pp. 105--112, 1978.

\bibitem{gaussian}
D.~A. Reynolds \emph{et~al.}, ``Gaussian mixture models.'' \emph{Encyclopedia of biometrics}, vol. 741, no. 659-663, 2009.

\bibitem{robustgmm}
M.-S. Yang, C.-Y. Lai, and C.-Y. Lin, ``A robust em clustering algorithm for gaussian mixture models,'' \emph{Pattern Recognition}, vol.~45, no.~11, pp. 3950--3961, 2012.

\bibitem{dbscan}
K.~Khan, S.~U. Rehman, K.~Aziz, S.~Fong, and S.~Sarasvady, ``Dbscan: Past, present and future,'' in \emph{The fifth international conference on the applications of digital information and web technologies (ICADIWT 2014)}.\hskip 1em plus 0.5em minus 0.4em\relax IEEE, 2014, pp. 232--238.

\bibitem{hdbscan}
L.~McInnes, J.~Healy, S.~Astels \emph{et~al.}, ``hdbscan: Hierarchical density based clustering.'' \emph{J. Open Source Softw.}, vol.~2, no.~11, p. 205, 2017.

\bibitem{mean}
Y.~Cheng, ``Mean shift, mode seeking, and clustering,'' \emph{IEEE transactions on pattern analysis and machine intelligence}, vol.~17, no.~8, pp. 790--799, 1995.

\bibitem{pcm}
R.~Krishnapuram and J.~M. Keller, ``The possibilistic c-means algorithm: insights and recommendations,'' \emph{IEEE transactions on Fuzzy Systems}, vol.~4, no.~3, pp. 385--393, 1996.

\bibitem{fuzzy}
E.~H. Ruspini, J.~C. Bezdek, and J.~M. Keller, ``Fuzzy clustering: A historical perspective,'' \emph{IEEE Computational Intelligence Magazine}, vol.~14, no.~1, pp. 45--55, 2019.

\bibitem{fcm}
J.~C. Bezdek, R.~Ehrlich, and W.~Full, ``Fcm: The fuzzy c-means clustering algorithm,'' \emph{Computers \& geosciences}, vol.~10, no. 2-3, pp. 191--203, 1984.

\bibitem{fuzzydensity}
Z.~Bian, F.-L. Chung, and S.~Wang, ``Fuzzy density peaks clustering,'' \emph{IEEE Transactions on Fuzzy Systems}, vol.~29, no.~7, pp. 1725--1738, 2020.

\bibitem{centroidfuzzy}
Y.~Lin and S.~Chen, ``A centroid auto-fused hierarchical fuzzy c-means clustering,'' \emph{IEEE Transactions on Fuzzy Systems}, vol.~29, no.~7, pp. 2006--2017, 2020.

\bibitem{robust}
Q.~Chen, F.~Nie, W.~Yu, and X.~Li, ``{$\l_{2,p}$}-norm and mahalanobis distance-based robust fuzzy c-means,'' \emph{IEEE Transactions on Fuzzy Systems}, vol.~31, no.~9, pp. 2904--2916, 2023.

\bibitem{adaptive}
D.~L. Pham and J.~L. Prince, ``An adaptive fuzzy c-means algorithm for image segmentation in the presence of intensity inhomogeneities,'' \emph{Pattern recognition letters}, vol.~20, no.~1, pp. 57--68, 1999.

\bibitem{generalized}
R.~J. Hathaway, J.~C. Bezdek, and Y.~Hu, ``Generalized fuzzy c-means clustering strategies using l/sub p/norm distances,'' \emph{IEEE transactions on Fuzzy Systems}, vol.~8, no.~5, pp. 576--582, 2000.

\bibitem{weightedfuzzy}
C.-H. Li, W.-C. Huang, B.-C. Kuo, and C.-C. Hung, ``A novel fuzzy weighted c-means method for image classification,'' \emph{International Journal of Fuzzy Systems}, vol.~10, no.~3, pp. 168--173, 2008.

\bibitem{generalizedfuzzy}
L.~Zhu, F.-L. Chung, and S.~Wang, ``Generalized fuzzy c-means clustering algorithm with improved fuzzy partitions,'' \emph{IEEE Transactions on Systems, Man, and Cybernetics, Part B (Cybernetics)}, vol.~39, no.~3, pp. 578--591, 2009.

\bibitem{afkm}
M.~J. Li, M.~K. Ng, Y.-m. Cheung, and J.~Z. Huang, ``Agglomerative fuzzy k-means clustering algorithm with selection of number of clusters,'' \emph{IEEE transactions on knowledge and data engineering}, vol.~20, no.~11, pp. 1519--1534, 2008.

\bibitem{rsfkm}
J.~Xu, J.~Han, K.~Xiong, and F.~Nie, ``Robust and sparse fuzzy k-means clustering.'' in \emph{IJCAI}, 2016, pp. 2224--2230.

\bibitem{pfcm}
N.~R. Pal, K.~Pal, J.~M. Keller, and J.~C. Bezdek, ``A possibilistic fuzzy c-means clustering algorithm,'' \emph{IEEE transactions on fuzzy systems}, vol.~13, no.~4, pp. 517--530, 2005.

\bibitem{psfcm}
J.~Chen, J.~Zhu, H.~Jiang, H.~Yang, and F.~Nie, ``Sparsity fuzzy c-means clustering with principal component analysis embedding,'' \emph{IEEE Transactions on Fuzzy Systems}, vol.~31, no.~7, pp. 2099--2111, 2022.

\bibitem{hsfc}
D.~Mas{\'\i}s, E.~Segura, J.~Trejos, and A.~Xavier, ``Fuzzy clustering by hyperbolic smoothing,'' in \emph{Conference of the International Federation of Classification Societies}.\hskip 1em plus 0.5em minus 0.4em\relax Springer International Publishing Cham, 2022, pp. 243--251.

\bibitem{hgfcm}
N.~Venu and B.~Anuradha, ``Integration of hyperbolic tangent and gaussian kernels for fuzzy c-means algorithm with spatial information for mri segmentation,'' in \emph{2013 Fifth International Conference on Advanced Computing (ICoAC)}.\hskip 1em plus 0.5em minus 0.4em\relax IEEE, 2013, pp. 280--285.

\bibitem{fcpfmc}
C.~Wu and S.~Pan, ``Fuzzy c-poincar{\'e} fr{\'e}chet means clustering in hyperbolic space,'' \emph{Expert Systems with Applications}, p. 128245, 2025.

\bibitem{docarmo}
M.~P. Do~Carmo and J.~Flaherty~Francis, \emph{Riemannian geometry}.\hskip 1em plus 0.5em minus 0.4em\relax Springer, 1992, vol.~2.

\bibitem{spivak}
M.~Spivak, ``A comprehensive introduction to differential geometry, publish or perish,'' \emph{Inc., Berkeley}, vol.~2, 1979.

\bibitem{lee2006riemannian}
J.~M. Lee, \emph{Riemannian manifolds: an introduction to curvature}.\hskip 1em plus 0.5em minus 0.4em\relax Springer Science \& Business Media, 2006, vol. 176.

\bibitem{ungargyrovector}
A.~Ungar, \emph{A gyrovector space approach to hyperbolic geometry}.\hskip 1em plus 0.5em minus 0.4em\relax Springer Nature, 2022.

\bibitem{ratcliffehyperbolic}
J.~G. Ratcliffe, ``Hyperbolic n-manifolds,'' \emph{Foundations of Hyperbolic Manifolds}, pp. 508--599, 2006.

\bibitem{hypenn}
O.~Ganea, G.~B{\'e}cigneul, and T.~Hofmann, ``Hyperbolic neural networks,'' \emph{Advances in neural information processing systems}, vol.~31, 2018.

\bibitem{numericalstab}
G.~Mishne, Z.~Wan, Y.~Wang, and S.~Yang, ``The numerical stability of hyperbolic representation learning,'' in \emph{International Conference on Machine Learning}.\hskip 1em plus 0.5em minus 0.4em\relax PMLR, 2023, pp. 24\,925--24\,949.

\bibitem{bezdek2013pattern}
J.~C. Bezdek, \emph{Pattern recognition with fuzzy objective function algorithms}.\hskip 1em plus 0.5em minus 0.4em\relax Springer Science \& Business Media, 2013.

\bibitem{yangsurvey}
M.-S. Yang, ``A survey of fuzzy clustering,'' \emph{Mathematical and Computer modelling}, vol.~18, no.~11, pp. 1--16, 1993.

\bibitem{hyperbolic}
A.~E. Xavier, ``The hyperbolic smoothing clustering method,'' \emph{Pattern Recognition}, vol.~43, no.~3, pp. 731--737, 2010.

\bibitem{fcsr}
J.~Zhou, G.~Yuan, C.~Gao, X.~Wang, J.~Dai, and W.~Pedrycz, ``Fuzzy clustering guided by spectral rotation and scaling,'' \emph{IEEE Transactions on Cybernetics}, 2024.

\bibitem{efkm}
F.~Nie, X.~Zhao, R.~Wang, X.~Li, and Z.~Li, ``Fuzzy k-means clustering with discriminative embedding,'' \emph{IEEE Transactions on Knowledge and Data Engineering}, vol.~34, no.~3, pp. 1221--1230, 2020.

\bibitem{rudin}
W.~Rudin \emph{et~al.}, \emph{Principles of mathematical analysis}.\hskip 1em plus 0.5em minus 0.4em\relax McGraw-hill New York, 1964, vol.~3.

\bibitem{bonnabel}
S.~Bonnabel, ``Stochastic gradient descent on riemannian manifolds,'' \emph{IEEE Transactions on Automatic Control}, vol.~58, no.~9, pp. 2217--2229, 2013.

\bibitem{dua2017uci}
D.~Dua, C.~Graff \emph{et~al.}, ``Uci machine learning repository, 2017,'' \emph{URL http://archive. ics. uci. edu/ml}, vol.~7, no.~1, p.~62, 2017.

\bibitem{ari}
J.~M. Santos and M.~Embrechts, ``On the use of the adjusted rand index as a metric for evaluating supervised classification,'' in \emph{International conference on artificial neural networks}.\hskip 1em plus 0.5em minus 0.4em\relax Springer, 2009, pp. 175--184.

\bibitem{nmi}
P.~A. Est{\'e}vez, M.~Tesmer, C.~A. Perez, and J.~M. Zurada, ``Normalized mutual information feature selection,'' \emph{IEEE Transactions on neural networks}, vol.~20, no.~2, pp. 189--201, 2009.

\bibitem{minibatch}
K.~Peng, V.~C. Leung, and Q.~Huang, ``Clustering approach based on mini batch kmeans for intrusion detection system over big data,'' \emph{IEEE access}, vol.~6, pp. 11\,897--11\,906, 2018.

\bibitem{ufcer}
F.~Nie, R.~Zhang, Y.~Duan, and R.~Wang, ``Unconstrained fuzzy c-means based on entropy regularization: An equivalent model,'' \emph{IEEE Transactions on Knowledge and Data Engineering}, 2024.

\bibitem{ifkmhc}
Z.~Shi, L.~Chen, W.~Ding, X.~Zhong, Z.~Wu, G.-Y. Chen, C.~Zhang, Y.~Wang, and C.~P. Chen, ``Ifkmhc: Implicit fuzzy k-means model for high-dimensional data clustering,'' \emph{IEEE Transactions on Cybernetics}, 2024.

\bibitem{tsne}
L.~Van~der Maaten and G.~Hinton, ``Visualizing data using t-sne.'' \emph{Journal of machine learning research}, vol.~9, no.~11, 2008.

\bibitem{wilcoxon}
F.~Wilcoxon, ``Individual comparisons by ranking methods,'' in \emph{Breakthroughs in statistics: Methodology and distribution}.\hskip 1em plus 0.5em minus 0.4em\relax Springer, 1992, pp. 196--202.

\bibitem{hopf}
D.~Spiegel, ``The hopf-rinow theorem,'' \emph{Notes available online}, vol.~2, 2016.

\end{thebibliography}

\newpage
\appendices
\section{Proofs of Lemmas and Theorems from Section \ref{convergence}.}
\textbf{Proof of the Lemma \ref{bounded}.}
\begin{proof}
    The Möbius addition of $\mathbf{x}$ and $\mathbf{y}$ in $\mathbb{D}_\alpha^{p}$ is defined as :
    \begin{equation}
    \mathbf{x} \oplus_{\alpha} \mathbf{y}:=\frac{\left(1+2 \alpha\langle \mathbf{x}, \mathbf{y}\rangle+\alpha\|\mathbf{y}\|^{2}\right) \mathbf{x}+\left(1-\alpha\|\mathbf{x}\|^{2}\right) \mathbf{y}}{1+2 \alpha\langle \mathbf{x}, \mathbf{y}\rangle+\alpha^{2}\|\mathbf{x}\|^{2}\|\mathbf{y}\|^{2}}.
    \end{equation}
    The Hyperbolic Distance function on $\left(\mathbb{D}_\alpha^{p}, g^{\alpha}\right)$ is given by,
    \begin{equation} \label{eq:hyperbolic}
        d_{hyp}(\mathbf{x}, \mathbf{y})=(2 / \sqrt{\alpha}) \tanh ^{-1}\left(\sqrt{\alpha}\left\|-\mathbf{x} \oplus_{\alpha} \mathbf{y}\right\|\right).
    \end{equation} \\
    $d_{hyp} \geq 0$ follows from the properties of $\tanh^{-1}$ and the norm in Equation \ref{eq:hyperbolic}. $d_{hyp}(\mathbf{x},\mathbf{y})$ is finite, since $\|\mathbf{x} \oplus_{\alpha} \mathbf{y}\| < \frac{1}{\sqrt{\alpha}}, \forall \mathbf{x}, \mathbf{y} \in \mathbb{D}_\alpha^{p}$ \& $\|\mathbf{x}\| = \|\mathbf{-x}\|, \forall \mathbf{x}\in \mathbb{D}_\alpha^{p}.$ \\
By the Cauchy-Schwarz inequality, 
\begin{align*}
    &\;|\langle \mathbf{x}, \mathbf{y} \rangle| \leq \|\mathbf{x}\|\|\mathbf{y}\| \\
    \implies& |\alpha  \langle \mathbf{x}, \mathbf{y} \rangle| \leq \alpha \|\mathbf{x}\|\|\mathbf{y}\| \\
    \implies& \big|(1+2\alpha\langle \mathbf{x}, \mathbf{y}\rangle+\alpha\|\mathbf{y}\|^2)\mathbf{x}\big| \\
    &\quad < \frac{1}{\sqrt{\alpha}}\big|(1+2 \alpha\|\mathbf{x}\|\|\mathbf{y}\|+\alpha\|\mathbf{y}\|^2)\big| \\
    \implies& \big|(1+2\alpha\langle \mathbf{x}, \mathbf{y}\rangle+\alpha\|\mathbf{y}\|^2)\mathbf{x} + (1-\alpha\|\mathbf{x}\|^2)\mathbf{y}\big|\\
    &\quad < \big|\frac{1}{\sqrt{\alpha}}(1+2\alpha\|\mathbf{x}\|\|\mathbf{y}\|+\alpha\|\mathbf{y}\|^2)\big|\\
    \implies& \big|(1+2\alpha\langle \mathbf{x}, \mathbf{y}\rangle+\alpha\|\mathbf{y}\|^2)\mathbf{x} \\
    &\quad+ (1-\alpha\|\mathbf{x}\|^2)\mathbf{y}\big| < 3 \frac{1}{\sqrt{\alpha}} \\
    \implies& \bigg|\frac{(1+2\alpha\langle \mathbf{x}, \mathbf{y}\rangle+\alpha\|\mathbf{y}\|^2)\mathbf{x}}{1+2\alpha\langle \mathbf{x}, \mathbf{y}\rangle+\alpha^2\|\mathbf{x}\|^2\|\mathbf{y}\|^2} \\
    &\quad+ \frac{(1-\alpha\|\mathbf{x}\|^2)\mathbf{y}}{1+2 \alpha \langle \mathbf{x}, \mathbf{y}\rangle+\alpha^2\|\mathbf{x}\|^2\|\mathbf{y}\|^2}\bigg| \notag \\
    &\quad < \frac{3(1/\sqrt{\alpha})}{3} \\
    \implies& \|\mathbf{x} \oplus_\alpha \mathbf{y}\| < \frac{1}{\sqrt{\alpha}} . 
\end{align*}
    
\end{proof}

\begin{figure*}
    \centering
    \includegraphics[width= \linewidth]{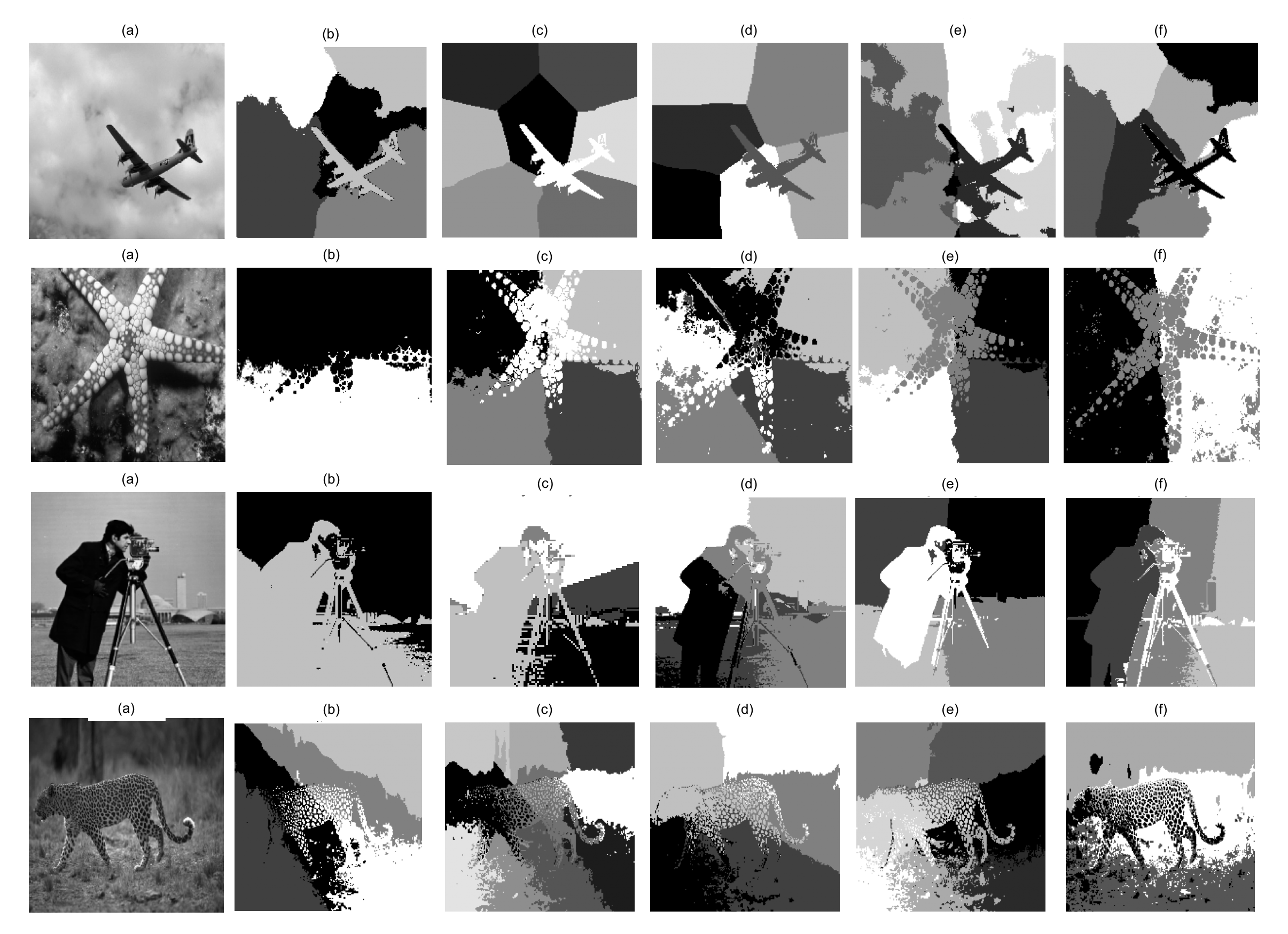}
    \caption{Image segmentation of natural images by different methods (a) Original Image, (b) PCM, (c) FCM, (d) EFKM, (e) HSFC, (f) HypeFCM (ours).}
    \label{fig:image-segment}
\end{figure*}

\begin{figure*}
    \centering
    \includegraphics[width= 0.85\linewidth]{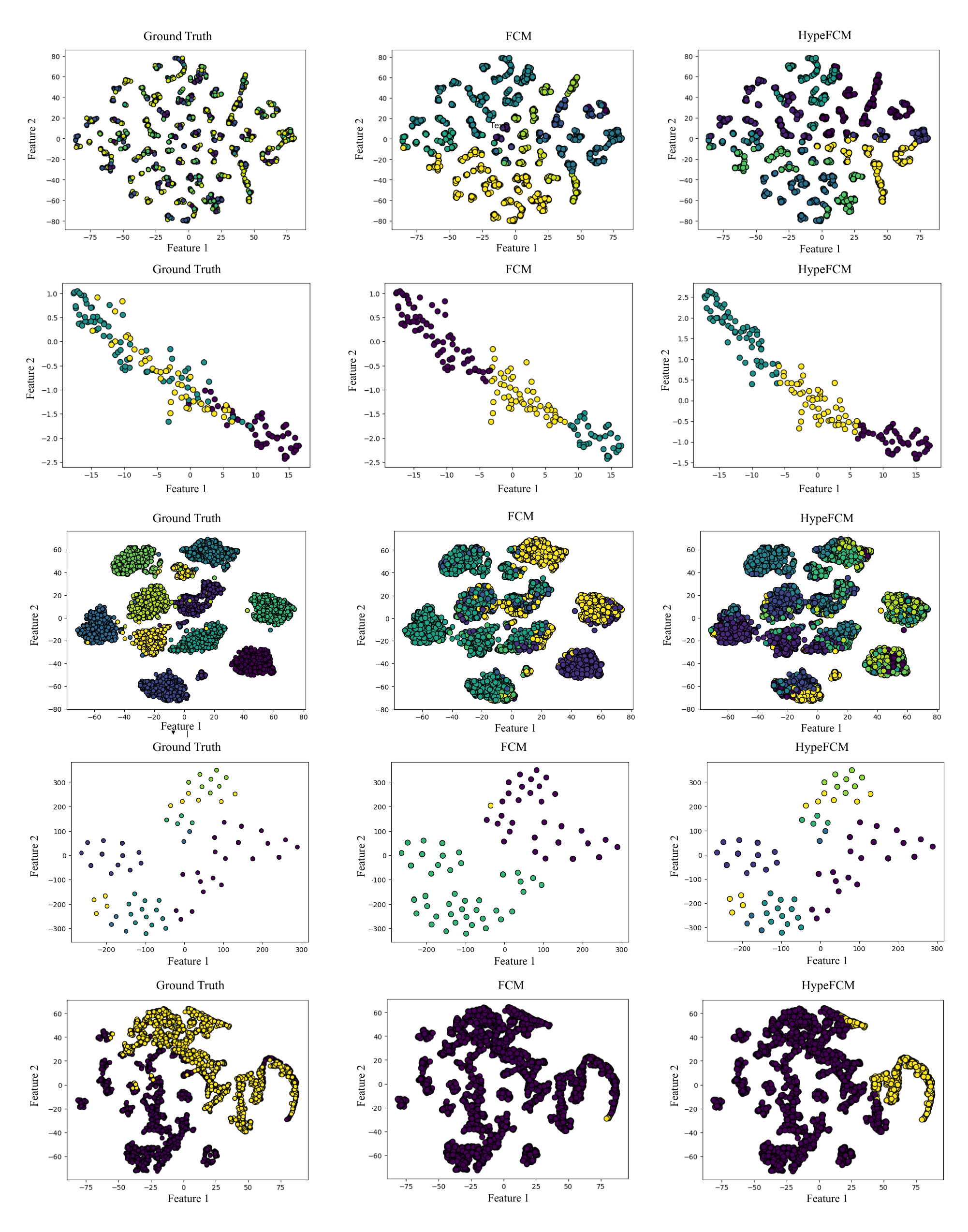}
    \caption{The t-SNE visualization of $5$ real-world benchmark datasets: Flights (5k), Wine, ORHD, Zoo, Phishing URL (5k) for FCM and HypeFCM clustering methods,respectively [from top to bottom].}
    \label{fig:real-tsne}
\end{figure*}

\noindent
\begin{lemma} \label{completeness}
    The Poincar\'e disc model $\mathbb{D}_\alpha^{p}$ with the distance metric $d_{hyp}$ is complete.
\end{lemma} 
\begin{proof}
    This follows from the Hopf-Rinow theorem \cite{hopf} as the hyperbolic space is a geodesically complete Riemannian manifold.
\end{proof}

\begin{table*}[ht]
\centering
\caption{\centering{Comparison of Clustering Performance across multiple Fuzzy contenders, FCM, PCM, P\_SFCM, EFKM, UFCER, FCSR, IFKMHC, HSFC, FCPFM, with our proposed HypeFCM, presented as Means with Standard Deviations. Best and second-best results are highlighted in bold and underlined, respectively.}}
\label{tab:supp_1}
\resizebox{\columnwidth}{!}{\begin{large}
\begin{tabular}{llcccccccccc}
\toprule
Dataset & Metric & FCM & PCM & P\_SFCM & EFKM & UFCER & FCSR & IFKMHC & HSFC & FCPFM & \textbf{HypeFCM (Ours)} \\
\midrule
\multirow{2}{*}{Donutcurves}
& ARI & 0.358$\pm$0.023$^{\dagger}$ & 0.070$\pm$0.010$^{\dagger}$ & 0.498$\pm$0.012$^{\dagger}$ & 0.443$\pm$0.012$^{\dagger}$ & 0.386$\pm$0.012$^{\dagger}$ & \textbf{0.654$\pm$0.013$^{\approx}$} & 0.411$\pm$0.013$^{\dagger}$ & 0.461$\pm$0.013$^{\dagger}$ & 0.428$\pm$0.010$^{\dagger}$ & \underline{0.601 $\pm$ 0.011} \\
& NMI & 0.514$\pm$0.010$^\dagger$ & 0.075$\pm$0.012$^\dagger$ & 0.563$\pm$0.011$^\dagger$ & 0.587$\pm$0.011$^\dagger$ & 0.426$\pm$0.012$^\dagger$ & \textbf{0.681$\pm$0.012}$^\approx$ & 0.524$\pm$0.013$^\dagger$ & 0.634$\pm$0.011$^\dagger$ & 0.539$\pm$0.019$^\dagger$ & \underline{0.667 $\pm$ 0.022} \\\midrule
\multirow{2}{*}{Disk-4000n}
& ARI & 0.001$\pm$0.011$^\dagger$ & 0.016$\pm$0.012$^\dagger$ & 0.001$\pm$0.013$^\dagger$ & -0.004$\pm$0.012$^\dagger$ & 0.001$\pm$0.012$^\dagger$ & 0.029$\pm$0.012$^\dagger$ & 0.001$\pm$0.002$^\dagger$ & \underline{0.101$\pm$0.011}$^\dagger$ & -0.003$\pm$0.082$^\dagger$ & \textbf{0.159 $\pm$ 0.013} \\
& NMI & 0.002$\pm$0.012$^\dagger$ & 0.017$\pm$0.012$^\dagger$ & 0.002$\pm$0.011$^\dagger$ & 0.003$\pm$0.012$^\dagger$ & 0.001$\pm$0.004$^\dagger$ & 0.031$\pm$0.012$^\dagger$ & 0.012$\pm$0.010$^\dagger$ & \underline{0.134$\pm$0.011}$^\dagger$ & 0.002$\pm$0.062$^\dagger$ & \textbf{0.193 $\pm$ 0.012} \\
\midrule
\multirow{2}{*}{Wisconsin B.C.}
& ARI & 0.831$\pm$0.016$^\dagger$ & 0.407$\pm$0.011$^\dagger$ & 0.525$\pm$0.012$^\dagger$ & 0.571$\pm$0.011$^\dagger$ & 0.375$\pm$0.009$^\dagger$ & 0.771$\pm$0.010$^\dagger$ & 0.579$\pm$0.009$^\dagger$ & 0.844$\pm$0.028$^\approx$ & \textbf{0.866$\pm$0.024}$^\approx$ & \underline{0.855 $\pm$ 0.027} \\
& NMI & 0.736$\pm$0.010$^\dagger$ & 0.497$\pm$0.009$^\dagger$ & 0.592$\pm$0.009$^\dagger$ & 0.474$\pm$0.009$^\dagger$ & 0.324$\pm$0.008$^\dagger$ & 0.699$\pm$0.009$^\dagger$ & 0.618$\pm$0.008$^\dagger$ & 0.743$\pm$0.027$^\approx$ & \textbf{0.771$\pm$0.021}$^\approx$ & \underline{0.755 $\pm$ 0.006} \\
\midrule
\multirow{2}{*}{Abalone} 
& ARI & 0.144 $\pm$ 0.062$^\dagger$ & 
0.000 $\pm$ 0.002$^\dagger$ & \textbf{0.161$\pm$0.019}$^\approx$ & 
0.144 $\pm$ 0.015$^\dagger$ & 
0.101 $\pm$ 0.015$^\dagger$ & 
0.003 $\pm$ 0.015$^\dagger$ & 
-0.001 $\pm$ 0.009$^\dagger$ & 
0.103 $\pm$ 0.033$^\dagger$ & 
0.148 $\pm$ 0.010$^\approx$ & 
\underline{0.151 $\pm$ 0.035} \\
& NMI & 0.134 $\pm$ 0.057$^\dagger$ & 
0.004 $\pm$ 0.002$^\dagger$ & \textbf{0.163$\pm$0.027}$^\approx$ & 
0.136 $\pm$ 0.015$^\dagger$ & 
0.123 $\pm$ 0.015$^\dagger$ & 
0.010 $\pm$ 0.015$^\dagger$ & 
0.017 $\pm$ 0.011$^\dagger$ & 
0.097 $\pm$ 0.038$^\dagger$ & 
0.159 $\pm$ 0.009$^\approx$ & 
\underline{0.163 $\pm$ 0.018} \\
\midrule
\multirow{2}{*}{Zoo} 
& ARI & 0.527 $\pm$ 0.025$^\dagger$ & 
0.302 $\pm$ 0.054$^\dagger$ & 
0.157 $\pm$ 0.067$^\dagger$ & 
0.343 $\pm$ 0.015$^\dagger$ & 
\underline{0.675 $\pm$ 0.015}$^\dagger$ & 0.591 $\pm$ 0.015$^\dagger$ & 
0.611 $\pm$ 0.015$^\dagger$ & 
0.447 $\pm$ 0.062$^\dagger$ & 
0.671 $\pm$ 0.035$^\dagger$ & 
\textbf{0.726 $\pm$ 0.018} \\
& NMI & 0.543 $\pm$ 0.029$^\dagger$ & 
0.457 $\pm$ 0.049$^\dagger$ & 
0.376 $\pm$ 0.058$^\dagger$ & 
0.427 $\pm$ 0.015$^\dagger$ & 
0.711 $\pm$ 0.015$^\dagger$ & 
0.615 $\pm$ 0.015$^\dagger$ & 
0.664 $\pm$ 0.015$^\dagger$ & 
0.482 $\pm$ 0.058$^\dagger$ & 
\underline{0.723 $\pm$ 0.045}$^\dagger$ & \textbf{0.788 $\pm$ 0.025} \\
\midrule
\multirow{2}{*}{Flights (2K)}
& ARI & 0.002$\pm$0.008$^\dagger$ & -0.005$\pm$0.006$^\dagger$ & 0.001$\pm$0.006$^\dagger$ & 0.018$\pm$0.012$^\dagger$ & 0.011$\pm$0.004$^\dagger$ & -0.003$\pm$0.012$^\dagger$ & 0.013$\pm$0.007$^\dagger$ & \underline{0.024$\pm$0.006}$^\dagger$ & 0.023 $\pm $ 0.005$^\dagger$ & \textbf{0.046 $\pm$ 0.014} \\
& NMI & 0.012$\pm$0.006$^\dagger$ & 0.020$\pm$0.008$^\dagger$ & 0.011$\pm$0.006$^\dagger$ & 0.031$\pm$0.012$^\dagger$ & 0.018$\pm$0.004$^\dagger$ & 0.028$\pm$0.009$^\dagger$ & 0.041$\pm$0.015$^\dagger$ & \underline{0.042$\pm$0.015}$^\dagger$ & 0.037 $\pm $ 0.009$^\dagger$ & \textbf{0.054 $\pm$ 0.016} \\
\midrule
\multirow{2}{*}{MNIST (5K)}
& ARI & 0.094$\pm$0.017$^\dagger$ & 0.000$\pm$0.004$^\dagger$ & 0.151$\pm$0.014$^\dagger$ & 0.155 $\pm$ 0.019$^\dagger$ & 0.125 $\pm$ 0.012$^\dagger$ & \textbf{0.326$\pm$0.021}$^\approx$ & 0.111$\pm$0.025$^\dagger$ & 0.140$\pm$0.024$^\dagger$ & 0.092 $\pm $ 0.015$^\dagger$ & \underline{0.273 $\pm$ 0.015} \\
& NMI & 0.203$\pm$0.022$^\dagger$ & 0.000$\pm$0.003$^\dagger$ & 0.381$\pm$0.016$^\approx$ & 
0.285 $\pm$ 0.014$^\dagger$ & 
0.243 $\pm$ 0.019$^\dagger$ & \textbf{0.435$\pm$0.024}$^\approx$ & 0.235$\pm$0.030$^\dagger$ & 0.319$\pm$0.026$^\dagger$ &  0.203$\pm$0.055$^\dagger$ & \underline{0.383 $\pm$ 0.011} \\
\midrule
\multirow{2}{*}{ORHD}
& ARI & 0.241$\pm$0.022$^\dagger$ & 0.008$\pm$0.004$^\dagger$ & 0.269$\pm$0.016$^\dagger$ & 0.281 $\pm$ 0.012$^\dagger$ & 0.331 $\pm$ 0.014$^\dagger$ & \textbf{0.421$\pm$0.028}$^\approx$ & 0.231$\pm$0.035$^\dagger$ & \underline{0.349$\pm$0.027}$^\approx$ &  0.231 $\pm $ 0.010$^\dagger$ & 0.335 $\pm$ 0.014 \\
& NMI & 0.398$\pm$0.025$^\dagger$ & 0.080$\pm$0.023$^\dagger$ & 0.421$\pm$0.019$^\dagger$ & 0.411 $\pm$ 0.012$^\dagger$ & 0.394 $\pm$ 0.015$^\dagger$ & 0.455$\pm$0.031$^\approx$ & 0.409$\pm$0.038$^\dagger$ & \underline{0.461$\pm$0.023}$^\approx$ & 0.383 $\pm $ 0.015$^\dagger$ & \textbf{0.466 $\pm$ 0.027} \\
\midrule
\textbf{Average Rank} & & $6.967$ & $9.950$ & $6.783$ & $4.900$ & $6.633$ & $3.300$ & $6.783$ & $3.800$ & $4.133$ & $\mathbf{1.466}$ \\
\bottomrule
\end{tabular}
\end{large}}
\begin{tablenotes}
    \item $^\dagger$ indicates a statistically significant difference between the performances of the corresponding algorithm and HypeFCM.
    \item $^\approx$ indicates the difference between performances of the corresponding algorithm and HypeFCM is not statistically significant.
\end{tablenotes}
\end{table*}

\noindent
\begin{lemma} \label{convexity}
     For any two points $\mathbf{x},\mathbf{y} \in \mathbb{D}_\alpha^{p}$, the function $d_{hyp}(\mathbf{x},\mathbf{y})$ is geodesically convex function in $\mathbb{D}_\alpha^{p}$.
\end{lemma}

\begin{proof}
Let us prove that the hyperbolic distance function $d_{hyp}$ is geodesically convex by showing that for any geodesic $\bm{\gamma}(t)$ in $\mathbb{D}^p_\alpha$, the function $t \mapsto d_{hyp}(\mathbf{x},\bm{\gamma}(t))$ is convex for any fixed point $\mathbf{x}$.

\noindent
Let $\bm{\gamma}: [0,1] \to \mathbb{D}^p_\alpha$ be a geodesic. Define:
\begin{equation}
f(t) = d_{hyp}(x,\bm{\gamma}(t)) = \frac{2}{\sqrt{\alpha}}\tanh^{-1}(\sqrt{\alpha}\|-\mathbf{x} \oplus_\alpha \bm{\gamma}(t)\|).
\end{equation}

\noindent
Computing the first derivative using the chain rule:
\begin{align}
f'(t) &= \frac{2}{\sqrt{\alpha}} \frac{1}{1-\alpha\|-\mathbf{x} \oplus_\alpha \bm{\gamma}(t)\|^2} \frac{d}{dt}\|-\mathbf{x} \oplus_\alpha \bm{\gamma}(t)\|.
\end{align}

\noindent
The second derivative is:
\begin{equation}
    \begin{aligned}
    f''(t) &= \frac{2}{\sqrt{\alpha}}  \Bigg[\frac{2 \alpha \|-\mathbf{x} \oplus_\alpha \bm{\gamma}(t)\|}{(1-\alpha\|-\mathbf{x} \oplus_\alpha \bm{\gamma}(t)\|^2)^2}  \Big(\frac{d}{dt}\|-\mathbf{x} \oplus_\alpha \bm{\gamma}(t)\|\Big)^2 \\
    &\quad + \frac{1}{1-\alpha\|-\mathbf{x} \oplus_\alpha \bm{\gamma}(t)\|^2}  \frac{d^2}{dt^2}\|-\mathbf{x} \oplus_\alpha \bm{\gamma}(t)\|\Bigg].
    \end{aligned}
\end{equation}

\noindent
Now we have : 

1. $\frac{2\alpha \|-\mathbf{x} \oplus_\alpha \bm{\gamma}(t)\|}{(1-\alpha\|-\mathbf{x} \oplus_\alpha \bm{\gamma}(t)\|^2)^2}$ is positive because:
   \begin{itemize}
   \item $\|-\mathbf{x} \oplus_\alpha \bm{\gamma}(t)\| > 0$ for $\mathbf{x} \neq \bm{\gamma}(t),$
   \item $1-\alpha\|-\mathbf{x} \oplus_\alpha \bm{\gamma}(t)\|^2 > 0$ due to the properties of the Poincar\'e disc model.
   \end{itemize}

2. $\Big(\frac{d}{dt}\|-\mathbf{x} \oplus_\alpha \bm{\gamma}(t)\|\Big)^2$ is non-negative.

3. $\frac{d^2}{dt^2}\|-\mathbf{x} \oplus_\alpha \bm{\gamma}(t)\|$ is non-negative along geodesics due to the negative curvature of the hyperbolic space.
\noindent
Since all terms in $f''(t)$ are non-negative, we have:
$$
f''(t) \geq 0.
$$
\noindent
Therefore, $f(t)$ is convex along any geodesic $\bm{\gamma}(t)$, proving that $d_{hyp}$ is geodesically convex.
\end{proof}
\noindent
\textbf{Proof of Lemma \ref{centroid}.}
\begin{proof}
The objective is geodesically convex by Lemma \ref{convexity}. The domain $\mathbb{D}_\alpha^{p}$ is a complete metric space by Lemma \ref{completeness}. By standard optimization theory in metric spaces, specifically the Existence and Uniqueness theorem for convex optimization in complete metric spaces, $f$ attains its minimum on $\mathbb{D}_\alpha^{p}$. The minimizer is unique due to strict geodesic convexity. We denote this unique minimizer as $\mathbf{v}_j^*$.
\end{proof}

    

\begin{table*}[h]
\centering
\caption{\centering{Statistical Significance of Clustering Methods Compared to HypeFCM (ours) via $p$-values for ARI and NMI Metrics.}}
\label{tab:pvalues}
\resizebox{\columnwidth}{!}{
\begin{tabular}{lccccccccccccc}
\toprule
Datasets & Metric & FCM & PCM & P\_SFCM & $K$-means & MiniBatchKMeans & IFKMHC & EFKM & UFCER & FCSR & HSFC & FCPFM \\
\midrule
\multirow{2}{*}{Cure-t1-2000n} & ARI & 6.1000e-05 & 6.1000e-05 & 6.1000e-05 & 6.1000e-05 & 6.1000e-05 & 6.1000e-05 & 2.2811e-01 & 6.1000e-05 & 6.1000e-05 & 6.1000e-05 & 6.1000e-05 \\
 & NMI & 6.1000e-05 & 6.1000e-05 & 6.1000e-05 & 6.1000e-05 & 8.6000e-01 & 6.1000e-05 & 4.1210e-02 & 6.1000e-05 & 6.1000e-05 & 6.1000e-05 & 6.1000e-05 \\
\midrule
\multirow{2}{*}{Cure-t2-4k} & ARI & 6.1000e-05 & 6.1000e-05 & 6.1000e-05 & 6.1000e-05 & 6.1000e-05 & 6.1000e-05 & 6.1000e-05 & 6.1000e-05 & 6.1000e-05 & 6.1000e-05 & 6.1000e-05 \\
 & NMI & 6.1000e-05 & 6.1000e-05 & 6.1000e-05 & 7.1700e-01 & 6.1000e-05 & 6.1000e-05 & 6.1000e-05 & 6.1000e-05 & 6.1000e-05 & 6.1000e-05 & 6.1000e-05 \\
\midrule
\multirow{2}{*}{Donutcurves} & ARI & 6.1000e-05 & 7.2200e-04 & 8.1000e-01 & 6.1000e-05 & 6.1000e-05 & 6.1000e-05 & 6.1000e-05 & 6.1000e-05 & 7.1450e-02 & 6.1000e-05 & 6.1000e-05 \\
 & NMI & 6.1000e-05 & 6.1000e-05 & 4.3100e-01 & 6.1000e-05 & 6.1000e-05 & 6.1000e-05 & 6.1000e-05 & 6.1000e-05 & 6.1000e-01 & 6.1000e-05 & 6.1000e-05 \\
\midrule
\multirow{2}{*}{Smile1} & ARI & 6.1000e-05 & 6.1000e-05 & 6.1000e-05 & 6.1000e-05 & 6.1000e-05 & 6.1000e-05 & 6.1000e-05 & 6.1000e-05 & 3.3100e-01 & 6.1000e-05 & 6.1000e-05 \\
 & NMI & 6.1000e-05 & 6.1000e-05 & 6.1000e-05 & 6.1000e-05 & 6.1000e-05 & 6.1000e-05 & 6.1000e-05 & 6.1000e-05 & 6.3250e-02 & 6.1000e-05  & 6.1000e-05 \\
\midrule
\multirow{2}{*}{Disk-4000n} & ARI & 2.0500e-03 & 5.6000e-05 & 1.0800e-04 & 5.6100e-03 & 5.6100e-03 & 6.1000e-05 & 6.1000e-05 & 6.1000e-05 & 6.1000e-05 & 4.6000e-03 & 6.1000e-05 \\
 & NMI & 6.1000e-05 & 6.1000e-05 & 6.1000e-05 & 6.1000e-05 & 6.1000e-05 & 6.1000e-05 & 6.1000e-05 & 6.1000e-05 & 6.1000e-05 & 6.1000e-05 & 6.1000e-05 \\
\midrule
\multirow{2}{*}{3MC} & ARI & 1.0800e-01 & 6.1000e-05 & 6.1000e-05 & 6.1000e-05 & 6.1000e-05 & 6.1000e-05 & 6.1000e-05 & 6.1000e-05 & 6.1000e-05 & 6.1000e-05 & 6.1000e-05 \\
 & NMI & 1.1100e-01 & 6.1000e-05 & 6.1000e-05 & 6.1000e-05 & 6.1000e-05 & 6.1000e-05 & 6.1000e-05 & 6.1000e-05 & 6.1000e-05 & 6.1000e-05 & 6.1000e-05 \\
\midrule
\multirow{2}{*}{Iris} & ARI & 7.1320e-02 & 6.1000e-05 & 6.1000e-05 & 6.1000e-05 & 6.1000e-01 & 6.1000e-05 & 6.1000e-05 & 6.1000e-05 & 6.1000e-05 & 6.1000e-05 & 6.1000e-05 \\
 & NMI & 8.1571e-02 & 6.1000e-05 & 6.1000e-05 & 6.1000e-05 & 6.1000e-05 & 6.1000e-05 & 6.1000e-05 & 6.1000e-05 & 6.1000e-05 & 6.1000e-05 & 6.1000e-05 \\
\midrule
\multirow{2}{*}{Glass} & ARI & 6.1000e-05 & 2.9000e-02 & 6.1000e-05 & 6.1000e-05 & 6.1000e-05 & 6.1000e-05 & 6.1000e-05 & 6.1000e-05 & 6.1000e-05 & 6.1000e-05 & 6.1000e-05 \\
 & NMI & 6.1000e-05 & 6.1000e-05 & 6.1000e-05 & 6.1000e-05 & 6.1000e-05 & 6.1000e-05 & 6.1000e-05 & 6.1000e-05 & 6.1000e-05 & 6.1000e-05 & 6.1000e-05 \\
 \midrule
\multirow{2}{*}{Ecoli} & ARI & 6.1000e-05 & 6.1000e-05 & 6.1000e-05 & 6.1000e-05 & 6.1000e-05 & 6.1000e-05 & 6.1000e-05 & 6.1000e-05 & 6.1000e-05 & 6.1000e-05 & 6.1000e-05 \\
 & NMI & 6.1000e-05 & 6.1000e-05 & 6.1000e-05 & 6.1000e-05 & 6.1000e-05 & 6.1000e-05 & 6.1000e-05 & 6.1000e-05 & 6.1000e-05 & 6.1000e-05 & 6.1000e-05 \\
\midrule
\multirow{2}{*}{Wine} & ARI & 3.1000e-04 & 6.1000e-05 & 2.9000e-05 & 6.1000e-05 & 8.1000e-04 & 6.1000e-05 & 7.1000e-02 & 6.1000e-05 & 6.1000e-05 & 6.1000e-05 & 1.1110e-01 \\
 & NMI & 4.0100e-05 & 6.1000e-05 & 6.1000e-05 & 6.1000e-05 & 6.1000e-05 & 6.1000e-05 & 1.1000e-01 & 6.1000e-05 & 6.1000e-05 & 2.1000e-02 & 5.1020e-02 \\
\midrule
\multirow{2}{*}{Zoo} & ARI & 6.1000e-05 & 6.1000e-05 & 6.1000e-05 & 1.1120e-01 & 6.1000e-05 & 6.1000e-05 & 6.1000e-05 & 6.1000e-05 & 6.1000e-05 & 6.1000e-05 & 6.1000e-05 \\
 & NMI & 6.1000e-05 & 6.1000e-05 & 6.1000e-05 & 2.1000e-01 & 6.1000e-05 & 6.1000e-05 & 6.1000e-05 & 6.1000e-05 & 6.1000e-05 & 6.1000e-05 & 6.1000e-05 \\
\midrule
\multirow{2}{*}{Abalone} & ARI & 7.1500e-04 & 6.1000e-05 & 1.0800e-01 & 8.8300e-04 & 7.1500e-04 & 6.1000e-05 & 6.1000e-05 & 6.1000e-05 & 6.1000e-05 & 6.1000e-05 & 8.2100e-02 \\
& NMI & 6.1000e-05 & 6.1000e-05 & 1.1000e-01 & 6.1000e-05 & 6.1000e-05 & 6.1000e-05 & 6.1000e-05 & 6.1000e-05 & 6.1000e-05 & 6.1000e-05 & 6.1022e-02 \\
\midrule
\multirow{2}{*}{Phishing (5k)} & ARI & 6.1000e-05 & 6.1000e-05 & 6.1000e-05 & 8.2100e-03 & 2.7400e-03 & 6.1000e-05 & 6.1000e-05 & 6.1000e-05 & 6.1000e-05 & 1.1800e-03 & 1.1000e-01 \\
& NMI & 6.1000e-05 & 6.1000e-05 & 6.1000e-05 & 6.1000e-05 & 6.1000e-05 & 6.1000e-05 & 6.1000e-05 & 6.1000e-05 & 6.1000e-05 & 6.1000e-05 & 8.1201e-02 \\
\midrule
\multirow{2}{*}{Wisconsin B.C.} & ARI & 2.9000e-03 & 6.1000e-05 & 6.1000e-05 & 6.1000e-05 & 6.1000e-05 & 6.1000e-05 & 6.1000e-05 & 6.1000e-05 & 6.1000e-05 & 6.5000e-02 & 5.1230e-01 \\
& NMI & 7.1000e-03 & 6.1000e-05 & 6.1000e-05 & 6.1000e-05 & 6.1000e-05 & 6.1000e-05 & 6.1000e-05 & 6.1000e-05 & 6.1000e-05 & 7.1000e-01 & 3.1025e-01 \\
\midrule
\multirow{2}{*}{Flights (2k)} & ARI & 1.1900e-03 & 7.1000e-04 & 2.9000e-03 & 6.1000e-05 & 6.1000e-05 & 6.1000e-05 & 6.1000e-05 & 6.1000e-05 & 6.1000e-05 & 6.1000e-05 & 6.1000e-05 \\
 & NMI & 6.1000e-05 & 6.1000e-05 & 6.1000e-05 & 6.1000e-05 & 6.1000e-05 & 6.1000e-05 & 6.1000e-05 & 6.1000e-05 & 6.1000e-05 & 6.1000e-05 & 6.1000e-05 \\
\midrule
\multirow{2}{*}{Flights (5k)} & ARI & 6.1000e-05 & 2.0500e-03 & 8.2300e-03 & 6.1000e-05 & 6.1000e-05 & 6.1000e-05 & 8.8100e-02 & 6.1000e-05 & 6.1000e-05 & 6.1000e-05 & 6.1000e-05 \\
 & NMI & 6.1000e-05 & 6.1000e-05 & 6.1000e-05 & 6.1000e-05 & 6.1000e-05 & 6.1000e-05 & 2.1100e-02 & 6.1000e-05 & 6.1000e-05 & 6.1000e-05 & 6.1000e-05 \\
\midrule
\multirow{2}{*}{MNIST (5k)} & ARI & 6.1000e-05 & 6.1000e-05 & 6.1000e-05 & 6.1000e-05 & 6.1000e-05 & 6.1000e-05 & 6.1000e-05 & 6.1000e-05 & 2.1000e-01  & 6.1000e-05 & 6.1000e-05 \\
 & NMI & 6.1000e-05 & 6.1000e-05 & 1.0700e-01 & 6.1000e-05 & 6.1000e-05 & 6.1000e-05 & 6.1000e-05 & 6.1000e-05 & 4.1000e-01  & 6.1000e-05 & 6.1000e-05 \\
\midrule
\multirow{2}{*}{ORHD} & ARI & 6.1000e-05 & 7.1900e-04 & 2.2200e-05 & 6.1000e-05 & 6.1000e-05 & 6.1000e-05 & 6.1000e-05 & 6.1000e-05 & 2.1020e-01 & 1.0800e-01 & 6.1000e-05 \\
 & NMI & 6.1000e-05 & 6.1000e-05 & 7.1000e-04 & 6.1000e-05 & 6.1000e-05 & 6.1000e-05 & 6.1000e-05 & 6.1000e-05 & 6.1560e-02 & 1.1000e-01 & 6.1000e-05 \\
\bottomrule
\end{tabular}}
\end{table*}

\begin{figure*}[ht]
    \centering
    \includegraphics[width= \linewidth]{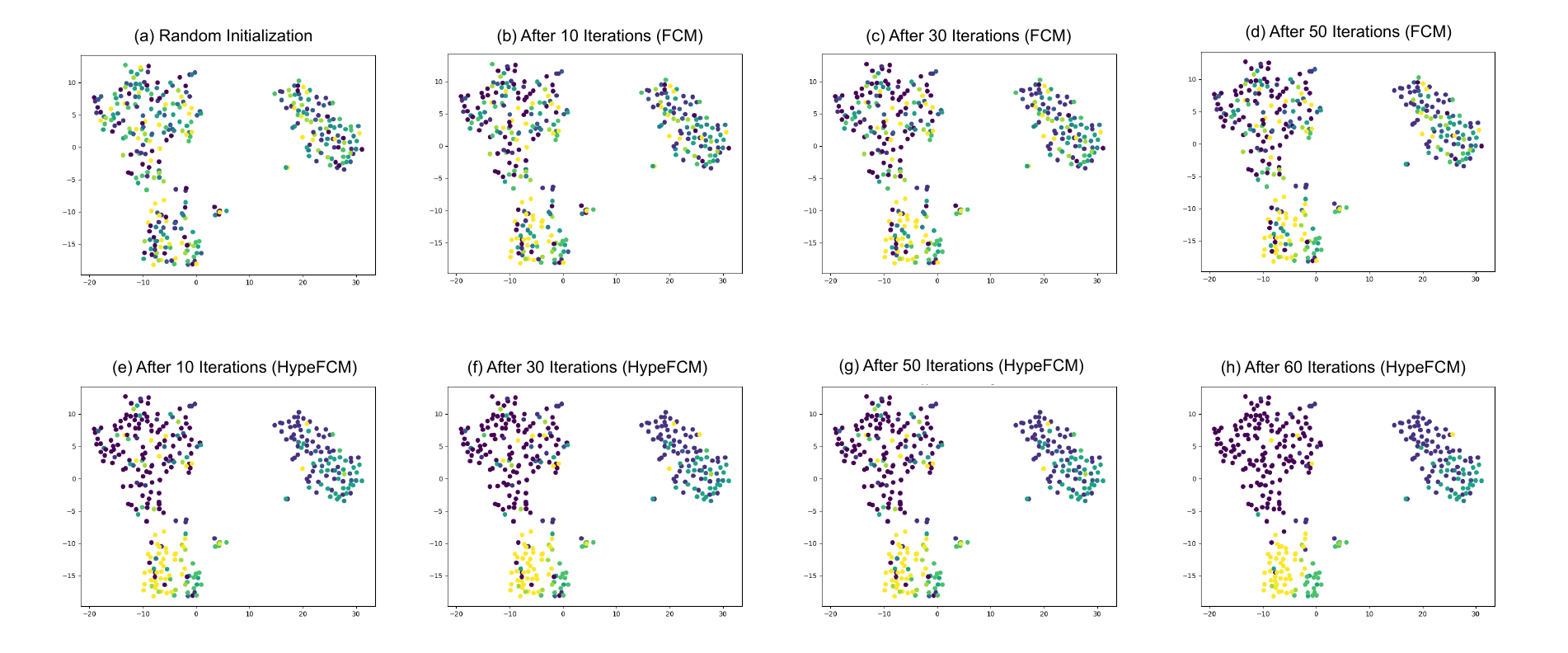}
    \caption{t-SNE plots on Ecoli datasets for different Iterations for FCM and HypeFCM methods.}
    \label{fig:warmup}
\end{figure*}

\section{More Experimentation on Datasets.}
We have compared our proposed method, HypeFCM, with other Fuzzy contenders in our paper, FCM, PCM, P-SFCM, EFKM, UFCER, FCSR, IFKMHC, HSFC, and FCPFM, on additional datasets. 

We have noted the performances of the datasets in Table \ref{tab:supp_1}. The results in Table \ref{tab:supp_1} indicate that our proposed method, HypeFCM, outperforms almost all the fuzzy benchmark contenders across all datasets in terms of ARI \cite{ari} and NMI \cite{nmi} values. The t-SNE \cite{tsne} visualization of HypeFCM in Figure \ref{fig:real-tsne} of $5$ real-world datasets, Flights (5k), Wisconsin Breast Cancer, ORHD, Zoo, Phishing URL (5k), demonstrates how effectively HypeFCM preserves cluster boundaries and captures the inherent hierarchical structures of these datasets. The Average Rank in Table \ref{tab:supp_1} is just the average of the ranks of ARI, and NMI values obtained from the executions of each method $15$ times, all initialized with the same seeding at a time.

In Table \ref{tab:pvalues}, we have shown the statistical significance of the methods compared to our proposed HypeFCM for each dataset via $p$-values for the metrics, ARI and NMI. The Wilcoxon Signed Rank \cite{wilcoxon} statistic is computed by ranking the absolute differences between paired observations of the ARI and NMI values individually for each of the datasets, by contrasting each method $15$ times, all initialized with the same seeding. The final test statistic is the smaller of these sums, and the $p$-value is derived from its distribution to assess whether the method differs significantly from HypeFCM. The lower $p$-values ($p \leq 0.05$) indicate that a statistically significant difference in the performance of the clustering method with our proposed HypeFCM, denoted by $^\dagger$ or else $^\approx$ denotes that there is no significant difference.

We have also tested our method with the other contenders for image segmentation. For the HypeFCM algorithm, we select the fuzzy weighting exponent $m$ = 2, and curvature parameter $\alpha$ = 1. Three images are taken from the BSDS500 dataset, $\#3096, \#12003, \#134052,$ and the Cameraman image dataset from 
https://github.com/antimatter15/cameraman
are used for image segmentation, shown in figure \ref{fig:image-segment}.

\begin{table}[!ht]
    \centering
    \caption{Details of all Datasets used for the Experimentation.}
    \label{tab:description}
    \begin{tabular}{lcccc}
        \toprule
        Datasets & No. of samples & Dimensions & No. of classes\\ 
        \midrule
        Flights & 1048576 & 7 & 16 \\
        PhiUSIIL Phishing URL & 235795 & 54 & 2 \\
        Zoo & 101 & 16 & 7 \\
        Abalone & 4177 & 8 & 2 \\
        Iris & 150 & 4 & 2 \\
        Ecoli & 336 & 7 & 8 \\ 
        Wine & 178 & 13 & 3 \\
        Glass & 214 & 9 & 7\\ 
        Wisconsin Breast Cancer & 699 & 9 & 2 \\
        Abalone & 4177 & 8 & 2 \\ 
        ORHD & 5620 & 64 & 10 \\
        MNIST & 60000 & 784 & 10 \\
        \midrule
        Cure-t1-2000n-2D & 2000 & 2 & 6 \\
        Cure-t2-4k  & 4200 & 2 & 7 \\
        Donutcurves & 1000 & 2 & 4 \\
        Smile1 & 1000 & 2 & 4 \\
        Disk-4000n & 4000 & 2 & 2 \\
        3MC & 400 & 2 & 3 \\        
        \bottomrule
    \end{tabular}
\end{table}

\end{document}